\documentclass[11pt,letter]{article}
\usepackage{FW2020c}

\title{A Framework of Learning through Empirical Gain Maximization}

\author[1]{Yunlong Feng}
\author[2]{Qiang Wu}
\affil[1]{Department of Mathematics and Statistics, University at Albany}
\affil[2]{Department of Mathematical Sciences, Middle Tennessee State University}

\date{}

\begin{document}
\maketitle
    
\begin{abstract}

\noindent We develop in this paper a framework of empirical gain maximization (EGM) to address the robust regression problem where heavy-tailed noise or outliers may be present in the response variable. The idea of EGM is to approximate the density function of the noise distribution  instead of approximating the truth function directly as usual. Unlike the classical maximum likelihood estimation that encourages equal importance of all observations and could be problematic in the presence of abnormal observations, EGM schemes can be interpreted from a minimum distance estimation viewpoint and allow the ignorance of those observations. Furthermore, it is shown that several well-known robust nonconvex regression paradigms, such as Tukey regression and truncated least square regression, can be reformulated into this new framework. We then develop a learning theory for EGM, by means of which a unified analysis can be conducted for these well-established but not fully-understood regression approaches. Resulting from the new framework, a novel interpretation of existing bounded nonconvex loss functions can be concluded. Within this new framework, the two seemingly irrelevant terminologies, the well-known Tukey's biweight loss for robust regression and the triweight kernel for nonparametric smoothing, are closely related. More precisely, it is shown that the Tukey's biweight loss can be derived from the triweight kernel. Similarly, other frequently employed bounded nonconvex loss functions in machine learning such as the truncated square loss, the Geman-McClure loss, and the exponential squared loss can also be reformulated from certain smoothing kernels in statistics. In addition, the new framework enables us to devise new bounded nonconvex loss functions for robust learning.
\end{abstract}

\section{Introduction, Motivation, and Preliminaries}

In this paper, we are concerned with robust regression problems where conditional distributions may have heavier-than-Gaussian tails or be contaminated by outliers. In machine learning 
and statistics, regression procedures are typically carried out through Empirical Risk Minimization (ERM) or its variants. Denote $X$ as the input variable taking values in $\mathcal{X}\subset\mathbb{R}^d$ and $Y$ the output variable taking values in $\mathbb{R}$. Given $n$ i.i.d observations $\mathbf{z}=\{(x_i,y_i)\}_{i=1}^n$ and choosing a hypothesis space $\mathcal{H}$, ERM can be formulated as
\begin{align}\label{ERM_general}
f_\mathbf{z}=\arg\min_{f\in\mathcal{H}}\frac{1}{n}\sum_{i=1}^n \ell(y_i-f(x_i)),
\end{align}
where $\ell:\mathbb{R}\rightarrow \mathbb{R}_+$ is a loss function that measures the point-wise goodness-of-fit when using $f(x)$ to predict $y$. Several frequently employed loss functions for regression with continuous output include the square loss, the absolute deviation loss, as well as the check loss. The resulting ERM schemes learn, respectively, the conditional mean, the conditional median, and the conditional quantile function. Under the additive noise model
\begin{align}\label{data_generating_model}
Y=f^\star(X)+\varepsilon,
\end{align}
where $\varepsilon$ denotes the noise variable, these ERM schemes can also be deduced from maximum likelihood estimation (MLE) by assuming a Gaussian, Laplace, or asymmetry Laplace prior distribution of the noise variable, respectively. In this sense, these ERM schemes are essentially MLE-based, though in practice the prior distributional assumptions may not be imposed explicitly or may even be abandoned. To deal with robust regression problems, various M-estimators such as the Tukey regression estimator and the truncated least square estimator, which can be viewed as generalized maximum likelihood estimators, are proposed in robust statistics. Carrying over from robust estimation of location parameters in parametric statistics, the idea of M-estimation is to implement MLE based on heavy-tailed noise distributions. In the machine learning context, the nonparametric counterparts of these M-estimators are also frequently used for robust prediction, especially in the area of computer vision. 

While maximum likelihood estimates are efficient and often lead to effective regression estimators, their disadvantages are also obvious. To explain this, let us consider again the $n$-size sample $\mathbf{z}$ generated by the regression model \eqref{data_generating_model}. Denoting $p_\mathsmaller{\varepsilon|X}$ as the density of the noise variable $\varepsilon$ conditioned on $X$, the likelihood of drawing the sample $\mathbf{z}$ given the parameter ``$f$'' from the ``parameter space'' $\mathcal{H}$ is
$\mathcal{L}(\mathbf{z};f)=\prod_{i=1}^n p_\mathsmaller{\varepsilon|x_i}(y_i-f(x_i))$.
In MLE, one aims at seeking the optimal  ``parameter'' $f_\mathbf{z}$ in $\mathcal{H}$ such that this likelihood is maximized, i.e., 
\begin{align}\label{MLE}
f_\mathbf{z}:=\arg\max_{f\in\mathcal{H}}\prod_{i=1}^n p_\mathsmaller{\varepsilon|x_i}(y_i-f(x_i)).
\end{align}
In other words, the purpose of searching for the optimal ``parameter'' $f_\mathbf{z}$ is achieved by maximizing the product of the likelihood functions of all the observations. Taking a logarithm operation to the product leads to the log-likelihood $\log\mathcal{L}(\mathbf{z};f)$ and further brings us ERM schemes induced by the square loss and the least absolute deviation loss when the noise is assumed to be Gaussian and Laplace, respectively. MLE is known to be (asymptotically) more efficient than other estimators when the noise distribution is correctly specified. However, in the machine learning context,  the premise that the noise distribution is known \textit{a priori} is seldom the case and more often than not, is not realistic. On the other hand, one of the shortcomings of MLE lies in that it is sensitive to abnormal observations. This could be understood intuitively from the formula \eqref{MLE}, where the maximization of the product obviously encourages equal importance of all the observations. Specifically, the maximization does not allow the zeroness of the likelihood of any observation no matter whether the observation is abnormal or not. This could be problematic in practice where the acquired data are frequently contaminated. To address the non-robustness problem of MLE based methodologies, tremendous efforts have been made in the literature of statistics, machine learning, as well as other data science areas.

\subsection{Problem Formulation}\label{problem_setup}

Our study in this paper starts from a family of robust learning approaches of the  form
\begin{align}\label{EGM_initial}
f_\mathbf{z}=\arg\max_{f\in\mathcal{H}}\frac{1}{n}\sum_{i=1}^n p_\mathsmaller{\varepsilon|x_i}(y_i-f(x_i)),
\end{align}
where $\mathcal{H}$ is the hypothesis space and $p_\mathsmaller{\varepsilon|X}$ the density of the noise $\varepsilon|X$. Since $p_\mathsmaller{\varepsilon|X}$ is unknown in practice, we consider its surrogate $p_\sigma$ by ignoring the dependence of $\varepsilon$ on $X$ and the norming constant. Then, the scheme can be generalized to the form 
\begin{align}\label{EGM_general}
f_{\mathbf{z},\sigma}=\arg\max_{f\in\mathcal{H}}\frac{1}{n}\sum_{i=1}^n p_\sigma(y_i-f(x_i)),
\end{align}
where we call $p_\sigma:\mathbb{R}\rightarrow\mathbb{R}_+$ a \textit{gain function} if it is unimodal, attains its peak value at $0$, and integrable. Correspondingly, the scheme \eqref{EGM_general} is termed as \emph{Empirical Gain Maximization (EGM)}, a systematic investigation of which will be the main focus of this paper.  

\subsection{Motivating Scenarios}\label{subsec::motivating_scenarios}
Our development of the EGM framework and the introduction of gain function are inspired by the following motivating scenarios that find tremendous real-world applications in robust estimation across numerous data science fields. 
\medskip 

\begin{description}

\item[Motivating Scenario I: Tukey Regression.]
The well-known Tukey regression based on Tukey's biweight loss,  
proposed in \cite{beaton1974fitting}, can be equivalently formulated as an EGM scheme \eqref{EGM_general} associated with the gain function
	\begin{align*}
	p_\mathsmaller{\sigma}(t)=\left(1-\frac{t^2}{\sigma^2}\right)^3\mathbb{I}_{\{|t|\leq \sigma\}},
	\end{align*}
which we call \emph{triweight gain} function. Here and in what follows, for a set $S$ (or an event), $\mathbb{I}_S$ represents an indicator function that takes the value $1$ on $S$ (or when $S$ occurs) and $0$ otherwise. Tukey regression has found numerous applications in a great variety of data science areas where robustness is a concern; see e.g., \cite{green1984iteratively,meer1991robust,can1999robust,stewart1999estimating,bramati2007robust,belagiannis2015robust,chang2018robust,clarkson2019dimensionality}.

\item[Motivating Scenario II: Truncated Least Square.] 
The truncated least square regression can be equivalently formulated as an EGM scheme 
associated with the \emph{Epanechnikov gain} function
	\begin{align*}
	p_\mathsmaller{\sigma}(t)=\left(1-\frac{t^2}{\sigma^2}\right)\mathbb{I}_{\{|t|\leq \sigma\}}.
	\end{align*}
The related studies and applications of truncated least squares regression, to name a few, can be found in \cite{hinich1975simple,yang1995breakdown,ikami2018fast,lauer2018exact,liu2019minimizing}.

\item[Motivating Scenario III: Geman-McClure Regression.] 
Introduced in \cite{geman1985bayesian} for image analysis, the Geman-McClure regression can be formulated as 
an EGM scheme 
with the \emph{Cauchy gain} function $$p_\sigma(t)=\frac{\sigma^2}{\sigma^2+t^2}.$$ 
It has been applied extensively especially to the area of robust computer vision, see e.g., \cite{black1996unification,de1998view,nikou1998robust,yacoob1999tracking,yacoob2000learned,bar2006image,shah2017robust,chatterjee2017robust,jiang2020coherent}.

\item[Motivating Scenario IV: Maximum Correntropy.]
The maximum correntropy criterion \cite{liu2007correntropy, principe2010information}, motivated by maximizing the information gain measured by correntropy between the input variable $X$ and output variable $Y$, is equivalent to an EGM scheme 
with the \emph{Gaussian gain} function
\begin{align*}
p_\sigma(t)=\exp\left(-\frac{t^2}{2\sigma^2}\right).
\end{align*}
This gain function was also proposed as a goodness-of-fit measurement in various contexts of robust estimation under different terminologies such as the Welsch's loss \cite{dennis1978techniques}, 
the inverted Gaussian loss \cite{kording2004loss}, the exponential squared loss \cite{{wang2013robust}}, and the reflected normal loss \cite{spiring1993reflected}.  
Its theoretical properties were recently investigated in \cite{feng2018learning,fengWu2019learning,feng2017statistical,feng2020new} from a statistical learning viewpoint. 
\end{description}

\medskip 

Further details of the above gain functions and their correspondence to bounded nonconvex loss functions will be discussed in Section \ref{sec::formu_EGM} below.
What are common behind the above four robust learning schemes are that (1) all of them can be naturally formulated into the EGM framework \eqref{EGM_general} with the associated gain function either the kernel of a common probability distribution or a common smoothing kernel; (2) they are all introduced in order to pursue robustness in estimation procedures with a scale parameter $\sigma$ controlling the robustness; (3) they all have extremely wide applications in robust estimation problems from science to technology. However, in a distribution-free setup, their learning performance, especially the relationship between their learnability, robustness, and the scale parameter, has not been well assessed. In this sense, they are well-established learning schemes but not fully-understood ones; and (4) they were usually treated as  M-estimators and interpreted from an ERM viewpoint or further traced to MLE. However, such interpretations, on one hand, cannot reveal the working mechanism of the resulting estimators and so cannot fully explain their robustness, on the other hand, cannot help discover new learning schemes of the same kind.

\subsection{Objectives and Contributions}

By introducing gain functions and developing an EGM framework, the objectives of this study are not solely to develop new robust learning schemes but rather to (1) present a unified analysis of the several well-known robust regression estimators and study their performance from a learning theory viewpoint; (2) pursue novel insights and interpretations of these estimators, which could help explain their robustness merits; (3) develop a learning theory framework as well as novel machineries for analyzing  robust regression schemes that fall into the same vein; and (4) devise and explore more new robust regression estimators that are of the same kind. It turns out that our newly developed framework can fulfill these objectives effectively.         
Our main contributions are summarized as follows:
\begin{itemize}
\item Motivated by several widely used robust regression schemes, we introduce gain function as an alternate measurement of goodness-of-fit in regression and develop an EGM framework. It is shown that maximizing empirical gain in regression can be viewed as maximizing the summation of the likelihood functions. Unlike the product of likelihood functions in MLE, summation does not encourage the equal importance of all observations but allows even the zeroness of the likelihoods of abnormal ones. This observation may better explain the robustness of these regression schemes.   

\item Interestingly, the newly developed framework subsumes a bunch of well-known robust regression schemes, especially those listed in the motivating scenarios, such as the celebrated Tukey regression, the classic truncated least square regression, and the widely employed Geman-McClure regression in computer vision. We stress here that under the new framework, Tukey regression estimators can be obtained via EGM induced by triweight gain function, which comes from the triweight kernel; truncated least square estimators can be deduced from EGM associated with Epanechnikov gain function, which comes from the Epanechnikov kernel; and Geman-McClure regression estimators can be derived from EGM with Cauchy gain function, which results from the Cauchy kernel. Such findings provide us an alternative understanding of these robust regression schemes. That is, these M-estimators may be interpreted more naturally as minimum distance estimators. 

\item We conduct a unified learning theory analysis of EGM schemes. In the learning theory literature, convex regression schemes have been extensively studied. However, studies and assessments of nonconvex ones, such as Tukey regression, truncated least square regression, and Geman-McClure regression, are still sparse though they have been extensively applied. Our study provides a unified learning theory assessment of these robust regression schemes. More specifically, we consider two different setups, namely, learning without noise misspecification and distribution-free learning. We show that when learning without noise misspecification, EGM estimators are regression calibrated while in a distribution-free setup, under certain conditions, EGM estimators can learn the underlying truth function under weak moment conditions.     

\item The present study brings us novel insights into existing bounded nonconvex loss functions. Furthermore, the correspondence between bounded nonconvex losses and gain functions allow us to introduce more new bounded nonconvex losses that enjoy similar properties. For instance, a generalized Tukey's loss may be formulated to cater to further needs in robust regression as well as classification problems.
\end{itemize}

\subsection{Roadmap and Notation}

The roadmap of this paper is as follows. In Section~\ref{sec::first_look}, we provide some first look at EGM by comparing it with ERM and interpreting it from a minimum distance estimation viewpoint. Section~\ref{sec::formu_EGM} exposes the way of defining gain functions and formulating EGM.  Gain functions are exampled and categorized in this section. In particular, the correspondence between some representative bounded nonconvex losses and gain functions is also illustrated here. In Section~\ref{sec::sober_look}, we look into EGM schemes by investigating fundamental questions in EGM based learning and conducting a unified learning theory analysis. We assess the learning performance of EGM estimators in different scenarios. Specifically, as an instantiation, we show that one can directly apply the developed theory to those well-established robust learning schemes mentioned in the motivating scenarios. Some further insights and perspectives are provided in Section~\ref{sec::lessons_insights}. The paper is concluded in Section~\ref{sec::conclusion}. Intermediate lemmas and proofs of the theorems are provided in the appendix.   
\smallskip 

Throughout this paper, $\|\cdot\|_{2,\rho}^2$  denotes the $L^2$-norm induced by the marginal distribution $\rho_{\!_X}$.  $\mathcal C(\mathcal X)$ denotes the space of bounded continuous functions on $\mathcal X.$  $p_{\!_{Y|X}}$ stands for the conditional density of $Y$ conditioned on $X$ and $p_{\!_{Y|X=x}}$, or $p_{\!_{Y|x}}$ for brevity, represents the conditional density of $Y$ conditioned on $X=x$. We also denote  $a\lesssim b$ if  $a\leq c b$ for some absolute constant $c>0$.

\section{A First Look at Empirical Gain Maximization}\label{sec::first_look}
In this section, at an intuitive level, we present a first look at EGM by comparing it with the classical ERM. To this end, we first investigate situations where ERM based regression schemes may fail and then look into EGM by interpreting it from a \textit{minimum distance estimation} (MDE) as well as a \textit{maximum likelihood estimation} (MLE) viewpoint. 

\subsection{EGM vs. ERM}
As stated earlier, ERM based regression schemes can be traced to the MLE framework where one maximizes the product of the likelihoods of all the observations. That is,  
\begin{align}\label{ERM_MLE} 
f_\mathbf{z}=\arg\max_{f\in\mathcal{H}}\frac{1}{n}\prod_{i=1}^n p_\mathsmaller{\varepsilon|x_i}(y_i-f(x_i)).
\end{align} 
In \eqref{ERM_MLE}, for any $f\in\mathcal{H}$, maximizing the product of the likelihood functions encourages the equal importance of all the residuals $y_i-f(x_i)$, $i=1,\cdots,n$, and does not tolerate the zeroness of any likelihood no matter whether the residual is caused by an abnormal observation or not. In contrast, in EGM regression schemes, one maximizes the summation of the likelihood functions of the residuals. That is,  
\begin{align}\label{EGM_MDE} 
f_\mathbf{z}=\arg\max_{f\in\mathcal{H}}\frac{1}{n}\sum_{i=1}^n p_\mathsmaller{\varepsilon|x_i}(y_i-f(x_i)).
\end{align}
Intuitively, maximizing the summation instead of the product allows small values or even zeroness of likelihood of certain residuals and so may significantly reduce the impact of those abnormal ones. Therefore, EGM may outperform ERM in terms of robustness to abnormal observations.

\subsection{EGM: When MLE Meets MDE}
To proceed with our comparison, for any measurable function $f:\mathcal{X}\rightarrow \mathbb{R}$, we denote $E_f$ as the random variable defined by the residual between $Y$ and $f(X)$, i.e., $E_f=Y-f(X)$. Then, for any fixed $f$ and any realization of $X$, say $x$, the density function of $E_f|x$ can be obtained by translating that of $\varepsilon|x$ horizontally $f^\star(x)-f(x)$ units. Consequently, the density of $E_f|x$ can be expressed as
\begin{align*}
p_\mathsmaller{E_f|x}(t)=p_\mathsmaller{\varepsilon|x}(t+f(x)-f^\star(x)).
\end{align*}
Similarly, we also have
\begin{align*}
p_\mathsmaller{\varepsilon|x}(t)=p_\mathsmaller{E_f|x}(t+f^\star(x)-f(x)).
\end{align*}
Moreover, reminded by \cite{fan2014consistency}, we know that  
\begin{align*}
p_\mathsmaller{E_f}(t)=\int_{\mathcal{X}}p_\mathsmaller{\varepsilon|x}(t+f(x)-f^\star(x))\mathrm{d}\rho_X(x)
\end{align*}
defines a density of the random variable $E_f$, and 
\begin{align*}
p_\mathsmaller{\varepsilon}(t)=\int_{\mathcal{X}}p_\mathsmaller{E_f|x}(t+f^\star(x)-f(x))\mathrm{d}\rho_X(x)
\end{align*}
defines a density of the random variable $\varepsilon$. 

Continuing our discussion in the introduction, the population version of the M-estimators resulting from the ERM scheme \eqref{ERM_general} and so the MLE \eqref{MLE} can be equivalently cast as 
\begin{align*}
\widehat{f_\mathcal{H}}=\arg\max_{f\in\mathcal{H}}\mathbb{E}\log p_\mathsmaller{\varepsilon|X}(Y-f(X)).
\end{align*}
Simple computations show that one also has 
\begin{align*}
\widehat{f_\mathcal{H}}=\arg\min_{f\in\mathcal{H}}{\sf{KL}}\Big(p_\mathsmaller{\varepsilon|X}(Y-f(X)), \,p_{\mathsmaller{E_f|X}}(Y-f(X))\Big),
\end{align*}
where ${\sf{KL}}(p_1,p_2)$ denotes the KL-divergence between the two distributions $p_1$ and $p_2$ and measures their discrepancy.

On the other hand, recall that the EGM estimator \eqref{EGM_general} originates from \eqref{EGM_initial}, which can be treated as an M-estimator. Therefore,  the EGM scheme can also be interpreted from an MLE viewpoint. Meanwhile, note that the population counterpart of \eqref{EGM_initial} is 
\begin{align*}
f_\mathcal{H}=\arg\max_{f\in\mathcal{H}}\mathbb{E}p_\mathsmaller{\varepsilon|X}(Y-f(X)).  
\end{align*}
For any measurable functions $f$, we define the integrated squared density-based distance, ${\sf{dist}}(p_\mathsmaller{E_f},p_\mathsmaller{\varepsilon})$, between $p_\mathsmaller{E_f}$ and $p_\mathsmaller{\varepsilon}$ as
	\begin{align*}
	{\sf{dist}}(p_\mathsmaller{E_f},p_\mathsmaller{\varepsilon}):=\sqrt{\int_\mathcal{X}\int_{-\infty}^{+\infty}(p_\mathsmaller{E_f|x}(t)-p_\mathsmaller{\varepsilon|x}(t))^2\mathrm{d}t\,\mathrm{d}\rho_X(x)}.
	\end{align*}
It is easy to see that the above distance between $p_\mathsmaller{E_f}$ and $p_\mathsmaller{\varepsilon}$ defines a metric between $p_\mathsmaller{E_f}$ and $p_\mathsmaller{\varepsilon}$.  In particular, if $f$ equals $f^\star$ on $\mathcal{X}$, then we have ${\sf{dist}}(p_\mathsmaller{E_f},p_\mathsmaller{\varepsilon})=0$. Notice that the data-generating model $Y=f^\star(X)+\varepsilon$ defines a location family, which implies that 
\begin{align*}
f_\mathcal{H}=\arg\min_{f\in\mathcal{H}}{\sf{dist}}(p_\mathsmaller{E_f},p_\mathsmaller{\varepsilon}).
\end{align*}

Therefore, EGM can be interpreted both from an MLE viewpoint and from an MDE viewpoint. To be brief, when MLE meets MDE, EGM estimators come into sight. Because of such dual interpretations, they may possess built-in merits such as robustness delivered by MDE and fast convergence rates inherited from MLE, as we shall explore later.  

\section{Formulating Empirical Gain Maximization and Gain Functions}\label{sec::formu_EGM}
In this section, we first detail the reasoning process that leads to EGM \eqref{EGM_general} and also the definition of gain functions. More gain functions will be exampled and categorized and their correspondence with bounded nonconvex losses will also be discussed.

\subsection{Towards Making EGM based Learning Applicable}\label{subsec::applicable}
While the initial scheme \eqref{EGM_initial} seems to be promising due to its merits in terms of robustness, barriers may be encountered in its implementation, the primary one of which is tractability. This is because in practice $p_\mathsmaller{\varepsilon}$ is unknown in advance. Therefore, further efforts are needed to address this problem and make it practically applicable. One way of tackling this problem, inspired by existing studies in learning theory, is to find a tractable relaxation, which is the gain function $p_\mathsmaller{\sigma}$ defined in Section \ref{problem_setup}. Listed below are several possible strategies that may be adopted for this purpose.

\smallskip 

\begin{description}

\item[Assuming the Density.]
Recall that when MLE is used to estimate the conditional mean, the Gaussianity of the noise is usually assumed which leads to the least squares regression. 
If Laplace noise is assumed, then one comes to the least absolute deviation estimation which approaches the conditional median in regression. When moving attention to EGM,
following the same way, one may also assume that the noise variable $\varepsilon$ obeys a certain law such as the Gaussian or Laplace. In particular, with Laplace noise, we then have
$p_\mathsmaller{\sigma}(y_i-f(x_i))=\exp\left(-|y_i-f(x_i)|/\sigma\right)$.
Under mild conditions, it can be shown that the resulting EGM estimator approaches the conditional median or the conditional mode function with properly chosen $\sigma$ values. Similarly, with Gaussian noise, one gets
$p_\mathsmaller{\sigma}(y_i-f(x_i))=\exp\left(-(y_i-f(x_i))^2/2\sigma^2\right)$
and the resulting EGM scheme can be equivalently formulated as the maximum correntropy criterion stated in Motivating Scenario IV. 

We arrive at the EGM scheme \eqref{EGM_general} from the scheme \eqref{EGM_initial} by choosing $p_\sigma$ which is an assumed density and serves as a surrogate of the true noise density. However, we remark that, similar to MLE, EGM may still be practically effective even if the assumed noise density deviates from the ground truth, as shown in Section \ref{sec::sober_look} below.

\smallskip 

\item[Learning the Density.] Noticing that in \eqref{EGM_initial}, for any fixed $f$ in $\mathcal{H}$, we maximize the summation of the values of the unknown density $p_\mathsmaller{\varepsilon}$ at $n$ points $y_i-f(x_i)$, $i=1,\cdots,n$. When a learning machine $f$ is utilized, the observations $y_i-f(x_i)$, $i=1,\cdots,n$, can be treated as realizations of the unknown noise variable defined by the residual $Y-f^\star(X)$. Therefore, for each $i$, one could estimate the point-wise density $p_\mathsmaller{\varepsilon}(y_i-f(x_i))$ by using observations $y_j-f(x_j)$, $j=1,\cdots,i-1,i+1,\cdots, n$, by means of the \textit{Parzen window} density estimator. Explicitly, let $K_\sigma$ be a smoothing kernel with the bandwidth parameter $\sigma$, under mild conditions, one has the following estimate of $p_\mathsmaller{\varepsilon}$ that serves as its relaxation
\begin{align*}
p_\mathsmaller{\sigma}(y_i-f(x_i))= \frac{1}{n-1}\sum_{{j=1}\atop{j\neq i}}^n K_\sigma(y_i-f(x_i)-y_j+f(x_j)).
\end{align*}  
The resulting EGM scheme can be formulated as 
\begin{align*}
f_{\mathbf{z},\sigma}=\arg\max_{f\in\mathcal{H}}\frac{1}{n(n-1)}\sum_{i=1}^n \sum_{{j=1}\atop{j\neq i}}^n K_\sigma(y_i-f(x_i)-y_j+f(x_j)).
\end{align*}
Canonical smoothing kernels include Gaussian kernel, Laplace kernel, Epanechnikov kernel, uniform kernel, triangle kernel, etc. Interestingly, the above learning scheme is essentially equivalent to the one induced by the minimum error entropy algorithm \cite{erdogmus2000comparison,principe2010information} that was recently investigated from a learning theory viewpoint in \cite{hu2013learning,fan2014consistency,hu2016convergence,hu2019distributed, guo2020distributed}. 
 
\smallskip

\item[Approximating the Density.] Another way of dealing with the unknown density $p_\mathsmaller{\varepsilon}$ is that one may directly approximate this density function. To this end, we recall that $\varepsilon:= Y-f^\star(X)$ is a one-dimensional random variable. As a mild assumption, one may assume that $p_\mathsmaller{\varepsilon}$ is continuous on $\mathbb{R}$. From approximation theory, we know that one can approximate this one-dimensional continuous function $p_\mathsmaller{\varepsilon}$ arbitrarily well by using certain basis functions on $\mathbb{R}$. As an example, one may use the convex combination of the one-dimensional Gaussian kernel, which leads to
\begin{align*}
p_\mathsmaller{\sigma}(y_i-f(x_i))=\sum_{j=1}^K w_j \exp\left(-\frac{(y_i-f(x_i))^2}{\sigma_j^2}\right),
\end{align*}
where the coefficients $w_j$ are positive constants such that $\sum_{j=1}^K w_j=1$, $K\geq 1$ a positive integer, and $\sigma_j>0$ for $j=1, \cdots, K$. Then, the resulting EGM is 
\begin{align*}
f_{\mathbf{z},\sigma}=\arg\max_{f\in\mathcal{H}}\frac{1}{n}\sum_{i=1}^n\sum_{j=1}^K w_j \exp\left(-\frac{(y_i-f(x_i))^2}{\sigma_j^2}\right).
\end{align*}
In particular, if $K=1$, it reduces Gaussian EGM. We note that similar ideas have been investigated for robust learning, see e.g., \cite{chen2018mixture}.
\end{description}

In addition to the above-mentioned approaches to finding tractable relaxations of $p_\mathsmaller{\varepsilon}$, one may also use smoothing kernels from statistics 
since each smoothing kernel defines a density. By stretching or compressing a smoothing kernel vertically or horizontally, one may approximate the unknown density $p_\mathsmaller{\varepsilon}$. For illustration, we will provide more examples in the next subsection. It would be interesting to  explore further techniques for finding such a relaxation.

\subsection{Gain Function: Formal Definition and More Examples}\label{subsec::definition}
With the preparations above, we are now ready to introduce a formal definition of gain functions, which leads to the general EGM formulation \eqref{EGM_general}. 

\begin{definition}[Gain function]\label{definition::gain_function}
	A function $p_\mathsmaller{\sigma}:\mathbb{R}\rightarrow \mathbb{R}_+$ with a parameter $\sigma>0$ is said to be a gain function if there exists a generating function $\phi:\mathbb{R}\rightarrow\mathbb{R}_+$  such that $p_\mathsmaller{\sigma}(t)=\phi\left(t/\sigma\right)$ for any $t\in\mathbb{R}$ and the following conditions are satisfied:
\begin{enumerate}[label=\emph{(\arabic*)}]
		\item $0<\int_{-\infty}^{+\infty} \phi(t)\mathrm{d}t<+\infty;$ and 
		\item $\phi$ is  non-decreasing on $(-\infty,0]$ and non-increasing on $[0,+\infty)$. 
\end{enumerate} 	
\end{definition}

The gain function $p_\sigma$ is introduced as a surrogate of  $p_{\varepsilon}$. The scale parameter $\sigma$ is used to stretch or compress the function $\phi$ so as to approximate the density. According to the definition, gain functions attain their peak values at the point $0$. An intuitive explanation of this restriction is that one gains the most if a learning machine $f$ fits $y$ exactly at the point $x$.  

\begin{remark}
The terminology ``gain function" in Definition \ref{definition::gain_function} has been used in various disciplines. For instance, in game theory, gain function is better known as ``\textit{pay-off function}". It is a function defined on the set of situations in a game, the values of which are a numerical description of the utility of a player or of a team of players in a given situation. In economics, gain function is better known as ``\textit{utility function}". It is a function that measures preferences over a set of goods and services. Its value represents the satisfaction that consumers receive for choosing and consuming a product or service. In the present study, gain function is not referred to as the ones in game theory or economics, though it may be related to those terminologies. The introduction of gain function here is directly inspired by the studies in \cite{weiss1984estimation,weiss1987estimating} for robust statistical estimation.
\end{remark} 
 
Following Definition \ref{definition::gain_function} and the discussions in Section \ref{subsec::applicable}, one can immediately write out a variety of gain functions. 

\begin{example}[Triweight gain function]\label{Tukey_gain}
When taking the triweight kernel as a gain function, we have the 
triweight gain function
	\begin{align*}
	p_\mathsmaller{\sigma}(t)=\left(1-\frac{t^2}{\sigma^2}\right)^3\mathbb{I}_{\{|t|\leq \sigma\}}.
	\end{align*}
\end{example}

\begin{example}[Epanechnikov gain function]\label{Epanechnikov_gain}
When taking the Epanechnikov kernel as a gain function, we come to the Epanechnikov gain function
	\begin{align*}
	p_\mathsmaller{\sigma}(t)=\left(1-\frac{t^2}{\sigma^2}\right)\mathbb{I}_{\{|t|\leq \sigma\}}.
	\end{align*}
\end{example}

\begin{example}[Cauchy gain function]\label{Cauchy_gain}
	When considering the kernel of a Cauchy distribution with the location parameter $0$, we obtain the 
	Cauchy gain function
	\begin{align*}
	p_\mathsmaller{\sigma}(t)=\frac{\sigma^2}{\sigma^2+t^2}.
	\end{align*}
\end{example}

\begin{example}[Gaussian gain function]\label{Gaussian_gain}
	When considering the kernel of a standard Gaussian distribution, we have the 
	Gaussian gain function
	\begin{align*}
	p_\mathsmaller{\sigma}(t)=\exp\left(-\frac{t^2}{\sigma^2}\right).
	\end{align*}
\end{example}

\begin{example}[Laplace gain function]\label{laplace_gain}
Considering the kernel of a Laplace distribution with the location parameter $0$, we have the 
Laplace gain function
	\begin{align*}
	p_\mathsmaller{\sigma}(t)=\exp\left(-\frac{|t|}{\sigma}\right).
	\end{align*}
\end{example}

\begin{example}[Cosine gain function]\label{cosine_gain}
Using the cosine kernel as a gain function leads to the 
Cosine gain function
\begin{align*}
p_\sigma(t)=\cos\left(\frac{\pi t}{2\sigma}\right)\mathbb{I}_{\{|t|\leq \sigma\}}.
\end{align*}
\end{example}

\begin{example}[Uniform gain function]\label{uniform_gain}
Using the uniform kernel as a gain function gives the uniform gain function
\begin{align*}
p_\sigma(t)=\frac{1}{2\sigma}\mathbb{I}_{\{|t|\leq \sigma\}}.
\end{align*}
\end{example}

\subsection{Categorizing Gain Functions}

As shown above, a variety of gain functions can be introduced in various ways for different purposes. For instance, Gaussian and Cauchy gain functions may be employed in EGM to learn the conditional mean function in regression, while by means of the Laplace gain function one may learn the conditional median function. In this part, we make efforts to categorize gain functions by defining type $\alpha$ gain functions and strongly mean-calibrated gain functions. 

\begin{definition}
A gain function $p_\sigma$ is said to be of \emph{type $\alpha$} if there exist two nonnegative constants $\alpha$ and $c$ such that   
\begin{align*}
p_\sigma(t)= p_\sigma(0)-c\left(\frac{|t|}{\sigma}\right)^\alpha + R_\alpha\left(\frac{|t|}{\sigma}\right),
\end{align*}
where the remainder term $R_\alpha\left(\frac{|t|}{\sigma}\right)=o\left(\frac{|t|^\alpha}{\sigma^\alpha}\right)$ as $\frac{|t|}{\sigma}\to 0.$ In particular, $p_\sigma$ is said to be of \emph{exact type $\alpha$} if $R_\alpha\left(\frac{|t|}{\sigma}\right)=0$ for any
$\frac{|t|}{\sigma}\le 1.$  
\end{definition}

It is easy to verify that 
\begin{itemize}
    \item  the triweight, Cauchy, Gaussian, and Cosine gain functions in Examples \ref{Tukey_gain}, \ref{Cauchy_gain}, \ref{Gaussian_gain}, and \ref{cosine_gain} 
    are of type $2$;
   \item the Epanechnikov gain function in Example \ref{Epanechnikov_gain} is of exact type $2$;
    \item the Laplace gain function in Example \ref{laplace_gain} is of type $1$; and
    \item the uniform gain function in Example \ref{uniform_gain} is of exact type $0$.
\end{itemize}

Intuitively, a type $2$ gain function may be used for mean regression while a type $1$ gain function is 
for median regression. As mean regression will be the main focus in what follows, among type $2$ gain functions, we are particularly interested in strongly mean-calibrated ones as well as exactly mean-calibrated ones defined below.

\begin{definition}\label{mean_calibrated_gain}
A gain function $p_\sigma$ is said to be \emph{strongly mean-calibrated} if there exist a representing function $\psi:\mathbb{R}_+\to \mathbb{R}_+$ and absolute positive constants $L_1$ and $L_2$ such that $p_\sigma(t):=\psi(t^2/\sigma^2)$ and the following two conditions hold
\begin{enumerate}[label=\emph{(\arabic*)}]
       \item $\psi(t^2)$ is $L_1$-Lipschitz w.r.t. $t$ on $\mathbb{R}$; and
    \item  $\psi^\prime(0)<0$ and $\psi^\prime(t)$ exists and is $L_2$-Lipschitz 
    on $[0,1)$.
\end{enumerate}
In particular, a strongly mean-calibrated gain function $p_\sigma$ is said to be \emph{exactly mean-calibrated} if $\psi^\prime(t)$ is a constant function on $(0,1)$.
\end{definition}

As per the above definition, the type $2$ gain functions listed in Examples  \ref{Tukey_gain}-\ref{Gaussian_gain} and \ref{cosine_gain} can be further categorized as in Table \ref{type_2_categorization}. Clearly, not all type $2$ gain functions are strongly mean-calibrated. The following proposition further reveals the relations between strongly mean-calibrated and (exact) type $2$ gain functions.  

\begin{table}[t]
	\setlength{\tabcolsep}{12pt}
	\setlength{\abovecaptionskip}{5pt}
	\setlength{\belowcaptionskip}{5pt}
	\centering
	\captionsetup{justification=centering}
	\vspace{.5em}
	\begin{tabular}{@{\hspace{2pt}}lcccc @{\hspace{2pt}}}
		\toprule
	\textbf{Gain Function}	&  \textbf{Mean-Calibration} & $\psi(t)$ & $L_1$ &  $L_2$\\ \midrule
		\quad	Triweight   &  Strong  & $(1-t)^3\mathbb I_{\{|t|\le 1\}}$ & $\frac{96}{5\sqrt{5}}$ & $6$  \\ \midrule
	   \quad Epanechnikov  &  Exact & $(1-t)\mathbb I_{\{|t|\le 1\}}$  & $2$ &  $0$\\ \midrule
	  \quad  Cauchy & Strong  & $\frac 1{1+t}$ & $\frac{3\sqrt{3}}{8}$ &  $2$\\  \midrule 
	   	\quad   Gaussian & Strong  & $\mathrm e^{-t/2}$ & $\mathrm e^{-1/2}$ &   $\frac{1}{4}$\\ \midrule 
	   	   \quad Cosine &  Strong & $\cos\left(\frac{\pi\sqrt{t}}{2}\right) \mathbb I_{\{|t|\leq 1\}}  $ &   $\pi$ &  $\frac{\pi^4}{192}$ \\
 		\bottomrule
	\end{tabular}
	\caption{Further Categorizations of Type $2$ Gain Functions in Examples \ref{Tukey_gain}-\ref{uniform_gain}}\label{type_2_categorization} 
\end{table}

\begin{proposition}
A strongly mean-calibrated gain function must be of type $2$, and an exactly mean-calibrated gain function must be of exact type $2$. 
\end{proposition}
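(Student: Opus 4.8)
The plan is to reduce both claims to an elementary first-order analysis of the representing function $\psi$ after the change of variable $u = t^2/\sigma^2$. Since strong mean-calibration gives $p_\sigma(t) = \psi(t^2/\sigma^2)$ and $(|t|/\sigma)^2 = t^2/\sigma^2$, writing $u = t^2/\sigma^2 \ge 0$ turns the type-$2$ target $p_\sigma(t) = p_\sigma(0) - c(|t|/\sigma)^2 + R_2(|t|/\sigma)$ into a statement about the behavior of $\psi(u)$ near $u = 0$. I would first record that $u \to 0$ exactly when $|t|/\sigma \to 0$, so that the remainder condition "$o((|t|/\sigma)^2)$" in the definition of type $\alpha$ and the condition "$o(u)$" are literally the same requirement.

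For the type-$2$ claim, condition (2) of strong mean-calibration guarantees that $\psi$ is (right-)differentiable at $0$ with $\psi'(0) < 0$. By the very definition of the derivative, this is the first-order expansion $\psi(u) = \psi(0) + \psi'(0)\, u + o(u)$ as $u \to 0^+$. I would then set $c := -\psi'(0) > 0$ and $R_2(|t|/\sigma) := \psi(u) - \psi(0) + c\,u$, and use $p_\sigma(0) = \psi(0)$ to obtain $p_\sigma(t) = p_\sigma(0) - c(|t|/\sigma)^2 + R_2(|t|/\sigma)$ with $R_2(|t|/\sigma) = o(u) = o((|t|/\sigma)^2)$. Since $c \ge 0$ and $\alpha = 2 \ge 0$ supply the two nonnegative constants required, this shows $p_\sigma$ is of type $2$.

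For the exact-type-$2$ claim, assume $p_\sigma$ is exactly mean-calibrated, so $\psi'$ is a constant, say $m$, on $(0,1)$. Because $\psi'$ is $L_2$-Lipschitz on $[0,1)$ it is continuous there, so its value at $0$ equals $m$ as well; combined with $\psi'(0) < 0$ this gives $m < 0$. Integrating the constant derivative yields $\psi(u) = \psi(0) + m\,u$ for every $u \in [0,1)$. To reach the closed interval I would invoke condition (1): $t \mapsto \psi(t^2)$ is $L_1$-Lipschitz on $\mathbb{R}$, hence $u \mapsto \psi(u)$ is continuous on $\mathbb{R}_+$, so letting $u \to 1^-$ extends the affine identity to $u = 1$. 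Consequently, for all $t$ with $|t|/\sigma \le 1$ (equivalently $u \le 1$), $p_\sigma(t) = \psi(u) = \psi(0) + m\,u = p_\sigma(0) - c(|t|/\sigma)^2$ with $c = -m > 0$, so $R_2$ vanishes identically on $\{|t|/\sigma \le 1\}$, which is precisely exact type $2$.

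The computations here are light, so the only real care is bookkeeping: faithfully translating the conditions stated for $\psi$ as a function of $u = t^2/\sigma^2$ back into the type-$\alpha$ expansion of $p_\sigma$ as a function of $t$, and correctly handling the endpoint $u = 1$ in the exact case. This endpoint is where I expect the only genuine subtlety to lie, since condition (2) only controls $\psi'$ on the half-open interval $[0,1)$; it is condition (1) (the Lipschitzness of $t \mapsto \psi(t^2)$) that promotes continuity of $\psi$ up to and including $u = 1$ and thereby guarantees that the affine representation, and hence the exactness of the expansion, holds on the full closed region $|t| \le \sigma$ rather than merely its interior.
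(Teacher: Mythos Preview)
Your proof is correct and follows essentially the same route as the paper: both reduce to the substitution $u=t^2/\sigma^2$, verify the first-order expansion $\psi(u)=\psi(0)+\psi'(0)\,u+o(u)$, and set $c=-\psi'(0)>0$. The paper obtains the $o(u)$ remainder via the mean value theorem together with the $L_2$-Lipschitz property of $\psi'$, whereas you appeal directly to the definition of the derivative; your exact-case argument (integrating the constant derivative and pushing to $u=1$ by continuity) is more detailed than the paper's, which simply declares that conclusion ``obvious.''
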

\begin{proof}
Let $p_\sigma$ be a strongly mean-calibrated gain function. According to Definition \ref{mean_calibrated_gain}, there exists a representing function $\psi$ such that $p_\sigma(t)=\psi(t^2/\sigma^2)$. Applying the mean value theorem, we know that for any $0<u<1$, it holds that 
\begin{align*}
\psi(u)-\psi(0)=\psi^\prime(\xi)u,    
\end{align*}
where $0<\xi<u$. To show that $p_\sigma$ is of type $2$, it suffices to prove that there exists a positive constant $c$ such that 
\begin{align*}
\psi^\prime(\xi)u+cu= o(u), \quad \hbox{as}\,\, u\to 0. 
\end{align*}
This is obvious if we set $c=-\psi^\prime(0)>0$ and recall that   $\psi^\prime(u)$ is $L_2$-Lipschitz on $[0,1)$. Replacing $u$ with $t^2/\sigma^2$, we have proved that $p_\sigma$ is of type $2$. The conclusion that an exactly mean-calibrated gain function must be of exact type $2$ is obvious. 
\end{proof}

As we shall see later, EGM schemes induced by strongly mean-calibrated gain functions are asymptotically mean calibrated in regression and their sharp error bounds can be established. It should be remarked that the conditions for strongly mean-calibrated gain functions are sufficient to ensure the (asymptotic) mean calibration properties of the resulting EGM estimators and to establish fast convergence rates. In fact, such conditions can be relaxed to much weaker ones if one is only interested in regression consistency. Likewise, one can also further categorize type $1$ gain functions and investigate their behaviors in median regression, which is beyond the scope of the present study.  

\subsection{Interpreting Bounded Nonconvex Losses as Gain Functions}\label{translate_bounded_loss}
In recent years, bounded nonconvex loss functions are playing more and more important roles in machine learning applications especially in computer vision as it is commonly accepted that the boundedness of a loss function is essential for outlier resistance, see e.g., \cite{long2010random,she2011outlier}. Several canonical bounded nonconvex losses that are frequently employed in robust estimation problems include truncated square loss, Tukey's biweight loss, Geman-McClure loss, exponential squared loss, and Andrews loss. Interestingly, within the EGM framework, these bounded nonconvex losses can be naturally interpreted as gain functions. Such correspondences  are detailed below and are also summarized in Table \ref{correspondences_gain_loss}.

\begin{exampleprime} [Tukey's biweight loss] \label{Tukeys_loss}
The well-known Tukey's biweight loss for robust estimation was introduced in \cite{tukey1960survey} and is defined as 
\begin{align*}
\ell_\sigma(t)=
\begin{cases}
\frac{\sigma^2}{6}\left[1-\left(1-\frac{t^2}{\sigma^2}\right)^3\right],& \hbox{if}\quad |t|\leq \sigma;\\
\frac{\sigma^2}{6},&  \hbox{otherwise}.
\end{cases}
\end{align*}  
It can be deduced from the triweight gain function in Example \ref{Tukey_gain} and leads to the triweight EGM in Motivating Scenario I.  
\end{exampleprime}

\begin{exampleprime}
[Truncated square loss]\label{skipped_mean}
The truncated square loss proposed in \cite{hinich1975simple} is given as 
\begin{align*}
\ell_\sigma(t)=\min\{t^2,\sigma^2\}.
\end{align*}
It is also known as the skipped mean loss or Talwar loss. The truncated square loss can be translated from the Epanechnikov gain function in Example \ref{Epanechnikov_gain} and leads to the Epanechnikov EGM in Motivating Scenario II.
\end{exampleprime}

\begin{exampleprime}
[Geman-McClure loss]\label{GM_loss}
The Geman-McClure loss proposed in \cite{geman1985bayesian} is defined as follows
\begin{align*}
\ell_\sigma(t)= \frac{t^2}{\sigma^2+t^2}.  
\end{align*}
Clearly, it can be derived from the  Cauchy gain function in Example \ref{Cauchy_gain} and leads to the Geman-McClure regression in Motivating Scenario III. 
\end{exampleprime}

\begin{exampleprime}
[Exponential squared loss]\label{inverted_gaussian_loss} 
The exponential squared loss is defined as
\begin{align*}
\ell_\sigma(t)= \sigma^2 \left(1-\exp\left(-\frac{t^2}{2\sigma^2}\right)\right).
\end{align*}
It can be derived from the Gaussian gain function in Example \ref{Gaussian_gain} and leads to the Gaussian EGM in Motivating Scenario IV.
\end{exampleprime}

\begin{exampleprime}
[Exponential absolute loss]\label{exponential_absolute_loss} 
The exponential absolute loss studied in \cite{leonard2001bayesian} and \cite{chen2015robust} is defined as 
\begin{align*}
\ell_\sigma(t)= 1-\exp\left(-\frac{|t|}{\sigma}\right).    
\end{align*}	
It can be derived from the Laplace gain function in Example~\ref{laplace_gain}.
\end{exampleprime}

\begin{exampleprime}
[Andrews loss]\label{Andrews_loss}
Andrews loss was introduced in \cite{andrews1974robust} and has been applied in robust estimation in statistics and machine learning. It takes the form 
\begin{align*}
\ell_\mathsmaller{\sigma}(t)=
\begin{cases}
\sigma^2\left(1-\cos(\frac{\pi t}{2\sigma})\right), & \hbox{if}\,\,\,|t|\leq \sigma;\\
\sigma^2,\quad & \hbox{otherwise},
\end{cases}
\end{align*}
and can be derived from the Cosine gain function in Example \ref{cosine_gain}.
\end{exampleprime}

\begin{exampleprime}
[Box loss]\label{box_loss}
Box loss takes the following form 
\begin{align*}
\ell_\mathsmaller{\sigma}(t)=
\begin{cases}
0,\quad \hbox{if}\,\,\,|t|\leq \sigma;\\
1,\quad \hbox{otherwise}.
\end{cases}
\end{align*}
Corresponding to the uniform gain function in Example \ref{uniform_gain}, this loss function was employed to perform modal regression in \cite{lee1989mode} and the resulting EGM scheme also gives the maximum consensus problem in computer vision, see e.g., \cite{chin2019robust,chin2017maximum}. 
\end{exampleprime}

\begin{table}[t]
	\setlength{\tabcolsep}{12pt}
	\setlength{\abovecaptionskip}{5pt}
	\setlength{\belowcaptionskip}{5pt}
	\centering
	\captionsetup{justification=centering}
	\vspace{.5em}
	\begin{tabular}{@{\hspace{2pt}}llc@{\hspace{2pt}}}
		\toprule
	\textbf{Bounded Nonconvex Loss}	&  \textbf{Gain Function} & \textbf{Related Examples}\\ \midrule
			Tukey's biweight loss &  \quad  Triweight   &  \ref{Tukey_gain} and \ref{Tukeys_loss}\\ \midrule
	   Truncated square loss &  \quad Epanechnikov   &  \ref{Epanechnikov_gain} and \ref{skipped_mean}\\ \midrule
	    Geman-McClure loss & \quad Cauchy  &  \ref{Cauchy_gain} and \ref{GM_loss}\\  \midrule 
	   	   Exponential squared loss & \quad Gaussian   &  \ref{Gaussian_gain} and \ref{inverted_gaussian_loss}\\ \midrule 
	   	      Exponential absolute loss &\quad Laplace  &  \ref{laplace_gain} and \ref{exponential_absolute_loss} \\ \midrule
Andrews loss &  \quad Cosine   &  \ref{cosine_gain} and \ref{Andrews_loss}\\
\midrule 
Box loss &  \quad Uniform   &  \ref{uniform_gain} and \ref{box_loss}\\
\bottomrule
	\end{tabular}
	\caption{Correspondence between Gain Functions and Existing Bounded Nonconvex Losses}\label{correspondences_gain_loss}
\end{table}

Following the same way, one can translate more bounded nonconvex losses, such as the well-known Hampel's loss \cite{hampel1974influence}, into gain functions. Interestingly, such translations bridge the two seemingly irrelevant sets of tools, e.g., the triweight kernel and the Tukey's biweight loss, the Epanechnikov kernel and the truncated square loss, the Cauchy density function and the Geman-McClure loss. One benefit of such translations is that they allow us to interpret those ERM based estimators more naturally from a minimum distance estimation viewpoint, which can help explain their robustness.

\section{A Sober Look at Empirical Gain Maximization}\label{sec::sober_look}
In this section, we take a sober look at EGM. To this end, we first propose several fundamental questions that are raised when assessing EGM from a learning theory viewpoint. We then assess the performance of EGM estimators in two different setups, namely, the distribution-free setup and the setup where the noise distribution is correctly specified. We also conduct case studies by applying our theoretical results to the motivating scenarios.

\subsection{Fundamental Questions in Learning with EGM}

Recalling that in the ERM scheme \eqref{ERM_general}, in order to assess the out-of-sample prediction ability of $f_\mathbf{z}$, one evaluates the \textit{excess generalization error}
\begin{align*}
\mathbb{E}\ell(Y-f_{\mathbf{z}}(X))-\mathbb{E}\ell(Y-f_{\mathsmaller{\mathcal{M}},\,\ell}(X)),
\end{align*} 
where the expectation is taken jointly w.r.t. $X$ and $Y$ and
\begin{align*}
f_{\mathsmaller{\mathcal{M}},\,\ell}=\arg\min_{f\in\mathcal{M}}\mathbb{E}\ell(Y-f(X))
\end{align*}
serves as the oracle of the ERM scheme induced by the loss function $\ell$. When the loss function $\ell$ is chosen as the square loss, one has  $f_{\mathsmaller{\mathcal{M}},\,\ell}=f^\star$ with $f^\star$ being the conditional mean function. In particular, in this case, one also has the following relation
\begin{align*}
\|f_\mathbf{z}-f^\star\|_{2,\rho}^2=\mathbb{E}\ell(Y-f_\mathbf{z}(X))-\mathbb{E}\ell(Y-f^\star(X)).
\end{align*}
When $\ell$ is a general convex loss, under certain noise assumptions, one may still be able to characterize the oracle $f_{\mathsmaller{\mathcal{M}},\,\ell}$ and further show that the oracle is the underlying truth $f^\star$. Moreover, the convergence   $\mathbb{E}\ell(Y-f_\mathbf{z}(X))\rightarrow \mathbb{E}\ell(Y-f^\star(X))$ may also imply the convergence of $\|f_\mathbf{z}- f^\star\|_{2,\rho}^2$. In the statistical learning literature, the related studies that address the above concerns for ERM schemes induced by convex losses have been conducted extensively and theoretical frameworks have been well developed, see e.g., \cite{cucker2007learning,steinwart2008support}.

However, for EGM schemes, the story becomes more complicated due to the nonconcavity of gain functions and the involvement of the parameter $\sigma$. For any measurable function $f:\mathcal{X}\rightarrow\mathbb{R}$, we denote its \textit{generalization gain} associated with the gain function $p_\mathsmaller{\sigma}$ as   
$$\mathcal{G}_\sigma(f)=\mathbb{E}p_\mathsmaller{\sigma}(Y-f(X))$$ 
and refer to the empirical counterpart  
$$\mathcal{G}_{\sigma,\mathbf{z}}(f)=\frac{1}{n}\sum_{i=1}^n p_\mathsmaller{\sigma}(y_i-f(x_i))$$
as its \textit{empirical gain}. Moreover, we denote the quantity
$\mathcal{G}_\sigma(f_\mathsmaller{\mathcal{M},\sigma})-\mathcal{G}_\sigma(f)$
as the \textit{excess generalization gain} of $f$, where
$f_\mathsmaller{\mathcal{M},\sigma}:=\arg\max_{f\in\mathcal{M}}\mathcal{G}_\sigma(f)$
serves as the oracle of EGM  induced by the gain function $p_\mathsmaller{\sigma}$. With the above notations, the following fundamental questions regarding theoretical assessments of EGM arise naturally: 

\smallskip 
\begin{description}
\item [Question 1.] \textit{
Which gain function $p_\mathsmaller{\sigma}$ should one choose}?
\smallskip 

This question is somewhat similar to the one that was proposed in the context of ERM, namely, which loss function one should choose for ERM, and was investigated in \cite{rosasco2004loss,steinwart2007compare} in some scenarios. It is generally accepted that each loss function has its own merits in learning and the choice of the loss function for ERM may rest upon the learning task confronted. For instance, the square loss may be chosen if one is interested in mean regression; the least absolute deviation loss may be preferred in performing median regression; while the check loss may be a good option for quantile regression. Likewise, in the context of EGM, while a general answer to this question is not obtainable, the choice of the gain function may also need to be discussed case-by-case. For instance, the Gaussian gain function may be adopted for robust mean regression; the Laplace gain function may be used to perform median regression robustly; while the asymmetry Laplace gain function may be utilized for robust quantile regression.

\item[Question 2.] 
\textit{What is the oracle $f_\mathsmaller{\mathcal{M},\sigma}$}?
\smallskip 

Clearly, the oracle $f_\mathsmaller{\mathcal{M},\sigma}$ is defined in association with the parameter $\sigma$. Different $\sigma$ values may lead to different oracles, which together with the non-concavity of EGM, promotes barriers to the characterization of $f_\mathsmaller{\mathcal{M},\sigma}$. It would be interesting to give a full characterization of  $f_\mathsmaller{\mathcal{M},\sigma}$ under various circumstances. For instance, regarding Gaussian EGM, some efforts were made towards this direction in \cite{fenglearning}. However, due to the dependence on the parameter $\sigma$, the oracle $f_\mathsmaller{\mathcal{M},\sigma}$ may be far from the underlying truth function $f^\star$ that one intends to approach and so may not be much informative. In particular, characterizing the oracle $f_{\mathcal{M},\sigma}$ and its relation with $f^\star$ may be much involved. This same situation also arises when seeking an answer to the following fundamental question.

\item[Question 3.]
\textit{How to bound the excess generalization gain $\mathcal{G}_\sigma(f_\mathsmaller{\mathcal{M},\sigma})-\mathcal{G}_\sigma(f_{\mathbf{z},\sigma})$}? \textit{Whether the convergence of the excess generalization gain $\mathcal{G}_\sigma(f_\mathsmaller{\mathcal{M},\sigma})-\mathcal{G}_\sigma(f_{\mathbf{z},\sigma})$ towards $0$ implies the convergence of  $\|f_{\mathbf{z},\sigma}-f^\star\|_{2,\rho}^2$}?

\smallskip 
Following the clue of learning theory studies on ERM, the above questions also arise naturally. However, as mentioned above, due to the introduction of the parameter $\sigma$, the oracle $f_\mathsmaller{\mathcal{M},\sigma}$ may drift away from $f^\star$. In this case, bounding the excess generalization gain $\mathcal{G}_\sigma(f_\mathsmaller{\mathcal{M},\sigma})-\mathcal{G}_\sigma(f_{\mathbf{z},\sigma})$ may again not be much informative and its convergence may not imply the closeness between $f_{\mathbf{z},\sigma}$ and $f^\star$. In fact, following the study in \cite{steinwart2007compare}, under some stringent assumptions on the noise variable $\varepsilon$, one may conclude that $f_{\mathcal{M},\sigma}$ is the same as $f^\star$. However, given that in the machine learning context distribution-free learning is preferred, we prefer not to impose such assumptions on the noise.

\end{description}

In what follows, we shall make efforts to address Questions 2 and 3 above. To this end, recall that the purpose of EGM is to learn the truth function $f^\star$. Though, the target hypothesis $f_{\mathcal{M},\sigma}$ may vary due to different choices of the $\sigma$ values. Therefore, what really matters here is the location function $f^\star$ rather than the target hypothesis $f_{\mathcal{M},\sigma}$. This inspires us to directly take $f^\star$ as the target hypothesis and redefine the \textit{excess generalization gain} of $f_{\mathbf{z},\sigma}$ as $\mathcal{G}_\sigma(f^\star)-\mathcal{G}_\sigma(f_{\mathbf{z},\sigma})$. With this redefinition, our main concerns in EGM based learning are then switched to the following ones: (1) Whether the excess generalization gain $\mathcal{G}_\sigma(f^\star)-\mathcal{G}_\sigma(f_{\mathbf{z},\sigma})$ decays to zero? (2) Whether $\mathcal{G}_\sigma(f_{\mathbf{z},\sigma}) \to \mathcal{G}_\sigma(f^\star)$ implies   $f_{\mathbf{z},\sigma}\to f^\star$?

\subsection{Distribution-Free Learning with EGM}
We first investigate learning performance of EGM estimators in a distribution-free setup, where distributional assumptions on the noise are absent while certain moment conditions may be imposed. To this end, we first introduce two assumptions, one on the capacity of $\mathcal{H}$ and the other on the tail behavior of the distribution of $Y$.

\begin{assumption}\label{capacity_assumption}
	$\mathcal H\subset \mathcal C(\mathcal X)$ and there exist positive constants $q$ and $c$ such that
	$$\log\mathcal{N}(\mathcal{H},\eta)\leq c \eta^{-q},\,\, \forall\,\,
	\eta>0,$$
where the covering number $\mathcal{N}(\mathcal{H},\eta)$ is defined as the minimal $k\in\mathbb{N}$ such that there exist $k$ balls in  $\mathcal C(\mathcal X)$ with centers in $\mathcal H$ and radius $\eta$ covering $\mathcal{H}$.
\end{assumption}

\begin{assumption}\label{moment_assumption}
There exists some $\epsilon>0$ such that   
$\mathbb{E}|Y|^{1+\epsilon} < +\infty. $
\end{assumption}

Assumption \ref{capacity_assumption} is typical in learning theory and is introduced here to control the complexity of the hypothesis space $\mathcal{H}$. Assumption \ref{moment_assumption} is a weak assumption on the distribution of the response variable. Note that under the boundedness assumption of $f^\star$, the finiteness of the $(1+\epsilon)$ moment condition on $Y$ is equivalent to the finiteness of that of the noise $Y-f^\star(X)$. It is rather weak as it admits the case where the noise has infinite variance.

Our first result for EGM 
is concerned with its mean regression calibration property, namely, whether $\mathcal{G}_\sigma(f_{\mathbf{z},\sigma})\to \mathcal{G}_\sigma(f^\star)$ implies $f_{\mathbf{z},\sigma}\to f^\star$. While a general answer to this question with a fixed $\sigma$ value may be negative, the following theorem tells us that some weak form of regression calibration can be obtained with an adaptive selection of $\sigma$ values.  

\begin{theorem}\label{calibration_distribution_free}
Let $f^\star=\mathbb{E}(Y|X)$ be bounded by $M$. Let Assumption \ref{moment_assumption} hold, $\sigma\geq \max\{2M,1\}$, and $p_\sigma$ be a strongly mean-calibrated gain function. For any bounded measurable function $f:\mathcal{X}\to \mathbb{R}$ with $\|f\|_\infty \leq M$, it holds that 
\begin{align*}
\left|\sigma^2\left[\mathcal{G}_\sigma(f^\star)-\mathcal{G}_\sigma(f)\right]-c_0\|f-f^\star\|_{2,\rho}^2\right|\leq c_\epsilon\sigma^{-\theta_\epsilon},
\end{align*}
where $c_0=-\psi'(0)>0$, $\theta_\epsilon=\min\{\epsilon, 2\}$, and $c_\epsilon$ is a positive constant that is independent of $f$ and will be given explicitly in the proof. Moreover, if $p_\sigma$ is exactly mean-calibrated, then the above inequality holds with $\theta_\epsilon = \epsilon$.
\end{theorem}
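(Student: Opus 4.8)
The plan is to reduce the statement to a one-dimensional expansion of $p_\sigma$ around the noise, exploiting that $f^\star=\mathbb E(Y|X)$ eliminates the first-order term. Writing $\varepsilon=Y-f^\star(X)$ and $\Delta=f-f^\star$ (so that $\|\Delta\|_\infty\le 2M\le\sigma$), I have $\mathcal G_\sigma(f^\star)-\mathcal G_\sigma(f)=\mathbb E[p_\sigma(\varepsilon)-p_\sigma(\varepsilon-\Delta)]$. Condition (1) of Definition~\ref{mean_calibrated_gain} makes $p_\sigma=\psi(\cdot^2/\sigma^2)$ Lipschitz, hence absolutely continuous, so the fundamental theorem of calculus together with the substitution $s=\varepsilon-v\Delta$ gives $\sigma^2[p_\sigma(\varepsilon)-p_\sigma(\varepsilon-\Delta)]=\Delta\int_0^1 G(\varepsilon-v\Delta)\,dv$, where $G(s):=\sigma^2 p_\sigma'(s)=2s\,\psi'(s^2/\sigma^2)$ wherever $\psi'$ exists and, crucially, $|G(s)|\le L_1\sigma$ everywhere (since $p_\sigma$ is $(L_1/\sigma)$-Lipschitz by condition (1)).

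Next I would isolate the leading term by splitting $G(s)=2\psi'(0)s+[G(s)-2\psi'(0)s]$. Integrating the first piece over $v\in[0,1]$ produces $2\psi'(0)\varepsilon\Delta-\psi'(0)\Delta^2$; taking expectations and using $\mathbb E[\varepsilon\Delta]=\mathbb E[\Delta(X)\,\mathbb E(\varepsilon|X)]=0$ leaves exactly $-\psi'(0)\mathbb E[\Delta^2]=c_0\|f-f^\star\|_{2,\rho}^2$. All remaining contributions form the remainder $\mathcal R=\mathbb E\big[\Delta\int_0^1 (G(w)-2\psi'(0)w)\,dv\big]$ with $w=\varepsilon-v\Delta$ (Fubini is legitimate because $|G|\le L_1\sigma$), and the theorem follows once I prove $|\mathcal R|\le c_\epsilon\sigma^{-\theta_\epsilon}$.

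The core of the argument is a tail analysis of $|G(w)-2\psi'(0)w|$, split at $|w|=\sigma$. On $\{|w|\le\sigma\}$ the argument $w^2/\sigma^2$ lies in $[0,1)$, where $\psi'$ is $L_2$-Lipschitz (condition (2)), so $|G(w)-2\psi'(0)w|=2|w|\,|\psi'(w^2/\sigma^2)-\psi'(0)|\le 2L_2|w|^3/\sigma^2$; on $\{|w|>\sigma\}$ I would bound $|G(w)|\le L_1\sigma$ by condition (1) (thereby never invoking $\psi'$ outside $[0,1)$) and keep $2c_0|w|$ explicit. Using $\|\Delta\|_\infty\le 2M$, this reduces $|\mathcal R|$ to the truncated moments $\mathbb E[|w|^3\mathbb I_{\{|w|\le\sigma\}}]/\sigma^2$, $\sigma\,\mathbb P(|w|>\sigma)$, and $\mathbb E[|w|\mathbb I_{\{|w|>\sigma\}}]$, each controlled by the single available moment $\mathbb E|w|^{1+\epsilon}\le 2^\epsilon(\mathbb E|\varepsilon|^{1+\epsilon}+(2M)^{1+\epsilon})<\infty$ (finite by Assumption~\ref{moment_assumption} and boundedness of $f^\star$) through the interpolations $|w|^3\mathbb I_{\{|w|\le\sigma\}}\le\sigma^{2-\epsilon}|w|^{1+\epsilon}$ when $\epsilon<2$ (and the genuine third moment when $\epsilon\ge2$) and $|w|^k\mathbb I_{\{|w|>\sigma\}}\le\sigma^{k-1-\epsilon}|w|^{1+\epsilon}$ for $k\in\{0,1\}$. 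Combining these yields $|\mathcal R|\le c_\epsilon\sigma^{-\min\{\epsilon,2\}}$ with $c_\epsilon$ explicit in $M,L_1,L_2,c_0,\epsilon$ and $\mathbb E|\varepsilon|^{1+\epsilon}$. For the exactly mean-calibrated case, $\psi'$ is constant on $(0,1)$ and hence equals $\psi'(0)$ there by continuity, so $G(w)-2\psi'(0)w\equiv0$ on $\{|w|<\sigma\}$; only the $\{|w|\ge\sigma\}$ contribution survives, and it was already bounded by $\sigma^{-\epsilon}$, giving $\theta_\epsilon=\epsilon$.

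The main obstacle is the moment bookkeeping under only a $(1+\epsilon)$ moment on $Y$ (equivalently on $\varepsilon$): a naive second-order Taylor remainder would demand a third moment of $\varepsilon$, which may be infinite, so the essential idea is to use the decay $|G(s)|\le L_1\sigma$ from condition (1) to truncate at scale $\sigma$ and to interpolate every truncated moment against $\mathbb E|\varepsilon|^{1+\epsilon}$. Making the near-origin ($L_2$-Lipschitz) estimate and the far-field (condition (1)) estimate fuse into the single exponent $\min\{\epsilon,2\}$ is the delicate step; a secondary technical point is justifying the FTC and Fubini manipulations from Lipschitzness alone, since $\psi$ is only assumed $C^1$ on $[0,1)$.
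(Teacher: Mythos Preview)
Your proposal is correct and follows essentially the same strategy as the paper, with only organizational differences. The paper writes the difference as $F_\sigma(Y-f)-F_\sigma(Y-f^\star)$ with $F_\sigma(t)=-\sigma^2 p_\sigma(t)-c_0 t^2$, applies the mean value theorem, and splits the expectation on $\{|Y|>\sigma/2\}$; you instead use the integral form of the FTC on $G(s)=\sigma^2 p_\sigma'(s)$, isolate the leading piece $2\psi'(0)s$, and split the remainder on $\{|w|\lessgtr\sigma\}$. These are the same computation: your $G(w)-2\psi'(0)w$ is exactly $-F_\sigma'(w)$, and both routes arrive at the cubic bound $2L_2|w|^3/\sigma^2$ near the origin and a linear-in-$\sigma$ bound far away, followed by identical moment interpolation against $\mathbb E|Y|^{1+\epsilon}$. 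Your version is marginally more self-contained in that it bounds the far-field piece directly via condition~(1) ($|G|\le L_1\sigma$) without the auxiliary lemma that $\psi$ itself is Lipschitz; the paper introduces that lemma (with constant $L_3=\max(L_2+c_0,L_1/2)$) and uses it on the tail set. Either organization gives the same exponent $\theta_\epsilon=\min\{\epsilon,2\}$, and both collapse the near-origin term to zero in the exactly mean-calibrated case for the same reason.
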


As Theorem \ref{calibration_distribution_free} applies to  $f_{\mathbf{z},\sigma}$, 
we say that $f_{\mathbf{z},\sigma}$ is \emph{asymptotically mean calibrated}. 
That is, when $\sigma$ is adjusted according to the sample size $n$ and its value diverges, the difference between  $\mathcal{G}_\sigma(f^\star)-\mathcal{G}_\sigma(f_{\mathbf{z},\sigma})$ and $\|f_{\mathbf{z},\sigma}-f^\star\|_{2,\rho}^2$ shrinks to $0$, yielding the calibration property.   

We next establish error bounds and convergence rates of $f_{\mathbf{z},\sigma}$. In particular, we consider two cases, namely, when the gain function $p_\sigma$ is strongly mean-calibrated and when $p_\sigma$ is exactly mean-calibrated, respectively. To this end, we introduce
\begin{align*}
f_\mathcal{H}=\arg\min_{f\in\mathcal{H}}\|f-f^\star\|_{2,\rho}^2    
\end{align*}
to characterize the approximation ability of the tuple $(\mathcal{H},\rho, L^2_{\rho_{\!_X}})$ to learn $f^\star$. When $p_\sigma$ is strongly mean-calibrated, the established  error bounds and convergence rates are as follows.

\begin{theorem}\label{error_bound_distribution_free}
Let Assumptions \ref{capacity_assumption} and \ref{moment_assumption} hold and $\sigma>\max\{2M,1\}$. Let $f_{\mathbf{z},\sigma}$ be produced by EGM \eqref{EGM_general} associated with a strong mean-calibrated gain function $p_\sigma$. For any $0<\delta<1$, with probability at least $1-\delta$, it holds that 
\begin{align}\label{error_bound_strongly}
	\|f_{\mathbf{z},\sigma}-f^\star\|_{2,\rho}^2\lesssim \|f_\mathcal{H}-f^\star\|_{2,\rho}^2+\log(2/\delta)\Psi_1(n,\epsilon,\sigma),
	\end{align}
	where 
\begin{align*}
\Psi_1(n,\epsilon,\sigma):=
\begin{cases}
\frac{1}{\sigma^\epsilon}+\frac{\sigma}{n^{1/(q+1)}},
&\,\quad\hbox{if}\quad 0<\epsilon\leq 1;\\[1ex] 
\frac{1}{\sigma^\epsilon}+\left(\frac{\sigma^{q+\frac{2\epsilon}{1+\epsilon}}}{n}\right)^{1/(q+1)}, 
&\,\quad \hbox{if}\quad 1 < \epsilon<2;\\[1ex]
\frac{1}{\sigma^2}+\left(\frac{\sigma^{q+\frac{4}{1+\epsilon}}}{n}\right)^{1/(q+1)}, 
&\,\quad \hbox{if}\quad 2\leq \epsilon <3;\\[1ex]
\frac{1}{\sigma^2}+\frac{\sigma}{n^{1/(q+1)}}, 
&\,\quad \hbox{if}\quad  \epsilon \geq 3.
\end{cases}
\end{align*}
\end{theorem}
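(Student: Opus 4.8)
The plan is to prove an oracle-type inequality by combining the calibration identity of Theorem~\ref{calibration_distribution_free} with a one-sided uniform concentration estimate for the empirical gain. The starting point is the standard error decomposition. Writing the excess generalization gain of $f_{\mathbf{z},\sigma}$ relative to $f^\star$ and inserting the approximating hypothesis $f_\mathcal{H}$, I would split
\[
\mathcal{G}_\sigma(f^\star)-\mathcal{G}_\sigma(f_{\mathbf{z},\sigma})
=\underbrace{[\mathcal{G}_\sigma(f^\star)-\mathcal{G}_\sigma(f_\mathcal{H})]}_{\text{approximation}}
+\underbrace{[\mathcal{G}_\sigma(f_\mathcal{H})-\mathcal{G}_\sigma(f_{\mathbf{z},\sigma})]}_{\text{estimation}} .
\]
Since $f_{\mathbf{z},\sigma}$ maximizes the empirical gain \eqref{EGM_general} over $\mathcal{H}$ and $f_\mathcal{H}\in\mathcal{H}$, we have $\mathcal{G}_{\sigma,\mathbf{z}}(f_{\mathbf{z},\sigma})\ge\mathcal{G}_{\sigma,\mathbf{z}}(f_\mathcal{H})$, so the estimation term is bounded by $\sup_{f\in\mathcal{H}}(\mathbb{E}g_f-\mathbb{E}_\mathbf{z}g_f)$ for the increment class $g_f(x,y):=p_\sigma(y-f_\mathcal{H}(x))-p_\sigma(y-f(x))$. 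The approximation term is converted into $\|f_\mathcal{H}-f^\star\|_{2,\rho}^2$ by applying Theorem~\ref{calibration_distribution_free} to $f_\mathcal{H}$ (after clipping hypotheses into the ball $\{\|f\|_\infty\le M\}$, which is harmless since $\|f^\star\|_\infty\le M$), at the cost of the calibration bias $c_\epsilon\sigma^{-\theta_\epsilon}$. Finally, applying the lower calibration bound to $f_{\mathbf{z},\sigma}$ turns the whole excess gain back into $\|f_{\mathbf{z},\sigma}-f^\star\|_{2,\rho}^2$; the factor $\sigma^2$ produced here is what rescales every sample-error term in the end.

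The core of the argument is the sample-error bound, and here the strong mean-calibration hypotheses are essential. Because $\psi(t^2)$ is $L_1$-Lipschitz, $p_\sigma(t)=\psi(t^2/\sigma^2)$ is $(L_1/\sigma)$-Lipschitz in $t$; hence each increment obeys $|g_f|\lesssim \sigma^{-1}\|f-f_\mathcal{H}\|_\infty$ and, crucially, $\mathbb{E}g_f^2\le (L_1/\sigma)^2\|f-f_\mathcal{H}\|_{2,\rho}^2$. Feeding the latter through the calibration bound I would establish the variance--expectation (Bernstein) relation $\mathbb{E}g_f^2\lesssim [\mathcal{G}_\sigma(f^\star)-\mathcal{G}_\sigma(f)]+\sigma^{-2}\|f_\mathcal{H}-f^\star\|_{2,\rho}^2+\sigma^{-2-\theta_\epsilon}$, whose leading constant does not depend on $\sigma$. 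This is precisely the self-bounding condition that lets the $\sqrt{\mathrm{Var}}$ part of a one-sided Bernstein inequality be reabsorbed into the excess gain, leaving the range term of order $\sigma^{-1}$ as the dominant fluctuation. Discretizing $\mathcal{H}$ at sup-norm scale $\eta$ (Assumption~\ref{capacity_assumption}) introduces an error $\lesssim \eta/\sigma$ together with a complexity $\log\mathcal{N}(\mathcal{H},\eta)\le c\eta^{-q}$; since both the range and the discretization term carry the same factor $\sigma^{-1}$, balancing $\eta^{-q}/n\sim\eta/\sigma$ gives $\eta\sim n^{-1/(q+1)}$ and a sample error of order $\sigma^{-1}n^{-1/(q+1)}$ in the gain, that is, $\sigma\,n^{-1/(q+1)}$ after the $\sigma^2$ rescaling. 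This is exactly the $\epsilon\le1$ and $\epsilon\ge3$ regimes of $\Psi_1$.

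The remaining, and hardest, ingredient is controlling the heavy-tailed response with only the $(1+\epsilon)$-moment of Assumption~\ref{moment_assumption}. Although $p_\sigma$ itself is uniformly bounded, a sharp range and second-moment estimate for $g_f$ that is uniform over the unbounded data requires truncating the response at a threshold $T$ and treating the event $\{|Y|>T\}$ via Markov's inequality, $\mathbb{P}(|Y|>T)\le T^{-(1+\epsilon)}\mathbb{E}|Y|^{1+\epsilon}$. The truncated part is handled by the Bernstein machinery above, while the tail part contributes a term that must be balanced against $T$; choosing $T$ as an appropriate power of $\sigma$ is what replaces the clean $\sigma\,n^{-1/(q+1)}$ rate by the intermediate exponents $q+\frac{2\epsilon}{1+\epsilon}$ (for $1<\epsilon<2$) and $q+\frac{4}{1+\epsilon}$ (for $2\le\epsilon<3$) appearing in $\Psi_1$, the dependence on $\epsilon$ entering solely through this tail exponent. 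I expect this truncation-balancing step---reconciling the scale $\sigma$ at which the gain flattens with the admissible moment order, and all without any concavity to fall back on---to be the main obstacle. Once the four pieces (calibration bridge, empirical optimality, self-bounding Bernstein with covering numbers, and moment truncation) are assembled, one substitutes the sample and approximation bounds into the lower calibration inequality for $f_{\mathbf{z},\sigma}$, multiplies by $\sigma^2$, and collects terms to obtain \eqref{error_bound_strongly} with the stated $\Psi_1$.
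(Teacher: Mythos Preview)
Your high-level architecture---decompose the excess gain, use empirical optimality of $f_{\mathbf z,\sigma}$, run a self-bounding Bernstein argument with covering numbers, and convert back via Theorem~\ref{calibration_distribution_free}---matches the paper. The genuine gap is the variance estimate, and it is decisive.

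The bound you propose, $\mathbb{E}g_f^2\le(L_1/\sigma)^2\|f-f_\mathcal{H}\|_{2,\rho}^2$, is too weak. In the rescaled quantities $\widetilde{\mathcal G}_\sigma=\sigma^2\mathcal G_\sigma$ it reads $\mathbb{E}[\sigma^2 g_f]^2\lesssim\sigma^2\|f-f^\star\|_{2,\rho}^2\lesssim\sigma^2\bigl(\widetilde{\mathcal G}_\sigma(f^\star)-\widetilde{\mathcal G}_\sigma(f)+\sigma^{-\theta_\epsilon}\bigr)$, so the self-bounding constant carries an extra factor $\sigma^2$. The Bernstein ratio exponent then becomes $\Theta\sim n\gamma/\sigma^2$ rather than $n\gamma/\sigma$, and balancing against the covering cost $(L_1\sigma/\gamma)^q$ yields $\gamma^\star\sim(\sigma^{q+2}/n)^{1/(q+1)}$, which is worse than the claimed $\sigma/n^{1/(q+1)}$ by $\sigma^{1/(q+1)}$ in \emph{every} regime. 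Your balancing $\eta^{-q}/n\sim\eta/\sigma\Rightarrow\eta\sim n^{-1/(q+1)}$ only tracks the range contribution and silently drops the dominant variance term; once that term is restored you cannot recover $\Psi_1$.

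What the paper does instead is exploit the \emph{second} condition in strong mean-calibration (Lipschitz continuity of $\psi'$ on $[0,1)$), which forces $\psi$ itself---not only $t\mapsto\psi(t^2)$---to be Lipschitz. This yields the $\sigma$-free increment estimate
\[
\bigl|\sigma^2 p_\sigma(Y-f^\star(X))-\sigma^2 p_\sigma(Y-f(X))\bigr|\;\lesssim\;|f(X)-f^\star(X)|\,(|Y|+M),
\]
and \emph{this} is where the $(1+\epsilon)$-moment enters: interpolating the above with the range bound $2L_1M\sigma$ when $0<\epsilon\le1$, and applying H\"older when $\epsilon>1$, gives $\mathbb{E}\xi^2\lesssim\sigma^{1-\epsilon}$, respectively $\mathbb{E}\xi^2\lesssim\|f-f^\star\|_{2,\rho}^{2(\epsilon-1)/(1+\epsilon)}$, both without a $\sigma^2$ prefactor. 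The intermediate exponents $q+\tfrac{2\epsilon}{1+\epsilon}$ and $q+\tfrac{4}{1+\epsilon}$ in $\Psi_1$ then come not from a separate $Y$-truncation as you suggest, but from the power $\zeta_\epsilon=\tfrac{2}{1+\epsilon}$ that this variance bound produces in the ratio inequality, combined with the constraint $\gamma\ge c_\epsilon\sigma^{-\theta_\epsilon}$, which injects $\gamma^{-2/(1+\epsilon)}\lesssim\sigma^{2\theta_\epsilon/(1+\epsilon)}$ into the denominator of the Bernstein exponent.
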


With properly chosen $\sigma$ values, an immediate corollary
is as follows.

\begin{corollary}\label{corollary_I}
Under the assumptions of Theorem \ref{error_bound_distribution_free}, let $f^\star\in\mathcal{H}$ and $\sigma$ be chosen as $\sigma:= n^{\vartheta_1(\epsilon,q)}$, where 
\begin{align*}
\vartheta_1(\epsilon,q)=  
\begin{cases}
\frac{1}{(q+1)(\epsilon+1)}, &\quad\hbox{if}\quad 0<\epsilon\leq 1;\\[1ex]
\frac{1+\epsilon}{(1+\epsilon)(\epsilon+q+q\epsilon)+2\epsilon}, &\quad\hbox{if}\quad 1<\epsilon < 2;\\[1ex]
\frac{1+\epsilon}{(2+3q)(1+\epsilon)+\epsilon}, &\quad\hbox{if}\quad 2\leq \epsilon < 3;\\[1ex]
\frac{1}{3(q+1)}, &\quad\hbox{if}\quad \epsilon\geq 3.
\end{cases}
\end{align*}
Then for any $0<\delta<1$, with probability at least $1-\delta$, it holds that
\begin{align}\label{convergence_rates_strongly}
\|f_{\mathbf{z},\sigma}-f^\star\|_{2,\rho}^2\lesssim \log(2/\delta) n^{-\theta_\epsilon\vartheta_1(\epsilon, q)}, 
\end{align}
where $\theta_\epsilon=\min\{\epsilon,2\}$.
\end{corollary}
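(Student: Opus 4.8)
The plan is to read Corollary \ref{corollary_I} off Theorem \ref{error_bound_distribution_free}, so that the whole argument reduces to eliminating the approximation error and then optimizing $\sigma$ inside $\Psi_1(n,\epsilon,\sigma)$. First I would use the hypothesis $f^\star\in\mathcal H$: since $f_\mathcal H=\arg\min_{f\in\mathcal H}\|f-f^\star\|_{2,\rho}^2$ and $f^\star$ itself belongs to $\mathcal H$, the minimum is attained at $f^\star$ and hence $\|f_\mathcal H-f^\star\|_{2,\rho}^2=0$. Feeding this into \eqref{error_bound_strongly} leaves, with probability at least $1-\delta$,
\begin{align*}
\|f_{\mathbf{z},\sigma}-f^\star\|_{2,\rho}^2\lesssim \log(2/\delta)\,\Psi_1(n,\epsilon,\sigma),
\end{align*}
so it only remains to verify that the prescribed choice $\sigma=n^{\vartheta_1(\epsilon,q)}$ yields $\Psi_1(n,\epsilon,\sigma)\lesssim n^{-\theta_\epsilon\vartheta_1(\epsilon,q)}$ in each of the four moment regimes.

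The heart of the proof is a case-by-case balancing of the two summands of $\Psi_1$, each written as a pure power of $n$ after the substitution $\sigma=n^{\vartheta_1}$. In every regime the first summand is a bias term equal to $\sigma^{-\theta_\epsilon}=n^{-\theta_\epsilon\vartheta_1}$, which by itself already produces the target rate; the second summand is an estimation term, either $\sigma/n^{1/(q+1)}=n^{\vartheta_1-1/(q+1)}$ or $(\sigma^{a}/n)^{1/(q+1)}=n^{(a\vartheta_1-1)/(q+1)}$ with $a=q+\tfrac{2\epsilon}{1+\epsilon}$ when $1<\epsilon<2$ and $a=q+\tfrac{4}{1+\epsilon}$ when $2\le\epsilon<3$. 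The quantity $\vartheta_1(\epsilon,q)$ stated in the corollary is exactly the one that equates the two exponents; that is, it solves the scalar linear equation
\begin{align*}
-\theta_\epsilon\vartheta_1=\frac{a\vartheta_1-1}{q+1}\qquad\text{or}\qquad -\theta_\epsilon\vartheta_1=\vartheta_1-\frac{1}{q+1},
\end{align*}
according to the form of the estimation term. Since both summands then decay at the common rate $n^{-\theta_\epsilon\vartheta_1}$, the bound \eqref{convergence_rates_strongly} follows.

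Concretely I would dispatch the four cases in turn. For $0<\epsilon\le1$ and for $\epsilon\ge3$ the estimation term is $\sigma/n^{1/(q+1)}$, so the balance reads $(1+\theta_\epsilon)\vartheta_1=1/(q+1)$, which returns $\vartheta_1=\tfrac{1}{(q+1)(\epsilon+1)}$ and $\vartheta_1=\tfrac{1}{3(q+1)}$, respectively. For the two intermediate regimes I would insert the corresponding exponent $a$ and clear the fractions. Along the way one must also confirm the admissibility constraint $\sigma=n^{\vartheta_1}>\max\{2M,1\}$ demanded by Theorem \ref{error_bound_distribution_free}, which holds for all large $n$ since every $\vartheta_1(\epsilon,q)>0$. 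The only mildly delicate step — the one I would double-check — is the algebra in the middle regimes $1<\epsilon<2$ and $2\le\epsilon<3$, where the fractional exponents $\tfrac{2\epsilon}{1+\epsilon}$ and $\tfrac{4}{1+\epsilon}$ must be cleared by multiplying through by $1+\epsilon$ so that the balanced denominator collapses to the precise rational form displayed in the corollary; everything else is a routine matching of powers of $n$.
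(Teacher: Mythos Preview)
Your proposal is correct and matches the paper's own treatment: the paper explicitly states that Corollary~\ref{corollary_I} is immediate from Theorem~\ref{error_bound_distribution_free} and omits the proof, and your argument---vanishing the approximation error via $f^\star\in\mathcal H$ and then balancing the two powers of $n$ in $\Psi_1$ case by case---is precisely that immediate derivation.
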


When $p_\sigma$ is exactly mean-calibrated, improved error bounds and rates can be established.

\begin{theorem}\label{error_bound_distribution_free_II}
Under the assumptions of Theorem \ref{error_bound_distribution_free}, we further assume that $p_\sigma$ is exactly mean-calibrated. Then for any $0<\delta<1$, with probability at least $1-\delta$, it holds that 
\begin{align}\label{error_bound_exact}
	\|f_{\mathbf{z},\sigma}-f^\star\|_{2,\rho}^2\lesssim \|f_\mathcal{H}-f^\star\|_{2,\rho}^2+\log(2/\delta)\Psi_2(n,\epsilon,\sigma),
	\end{align}
	where 
	\begin{align*}
	\Psi_2(n,\epsilon,\sigma):=
	\begin{cases}
	\frac{1}{\sigma^\epsilon}+\frac{\sigma}{n^{1/(q+1)}},
	&\,\quad\hbox{if}\quad 0<\epsilon\leq 1;\\[1ex]
	\frac{1}{\sigma^\epsilon}+\left(\frac{\sigma^{q+\frac{2\epsilon}{1+\epsilon}}}{n}\right)^{1/(q+1)}, 
	&\,\quad \hbox{if}\quad \epsilon>1.
	\end{cases}
	\end{align*}
\end{theorem}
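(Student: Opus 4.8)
The plan is to treat Theorem~\ref{error_bound_distribution_free_II} as a sharpening of Theorem~\ref{error_bound_distribution_free}, reusing the same error-decomposition and concentration machinery and feeding in the stronger calibration estimate that exact mean-calibration unlocks. First I would invoke Theorem~\ref{calibration_distribution_free} in its exact form, in which the calibration gap is controlled with $\theta_\epsilon=\epsilon$ rather than $\min\{\epsilon,2\}$. Applied to $f=f_{\mathbf z,\sigma}$ this gives $c_0\|f_{\mathbf z,\sigma}-f^\star\|_{2,\rho}^2\le\sigma^2[\mathcal G_\sigma(f^\star)-\mathcal G_\sigma(f_{\mathbf z,\sigma})]+c_\epsilon\sigma^{-\epsilon}$, so that controlling $\|f_{\mathbf z,\sigma}-f^\star\|_{2,\rho}^2$ reduces to controlling the excess generalization gain $\mathcal G_\sigma(f^\star)-\mathcal G_\sigma(f_{\mathbf z,\sigma})$, while the term $c_\epsilon\sigma^{-\epsilon}$ is already the first summand of $\Psi_2$ in both regimes.

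Next I would decompose the excess gain. Setting $\xi_f:=p_\sigma(Y-f^\star(X))-p_\sigma(Y-f(X))$, so that $\mathbb E\xi_f=\mathcal G_\sigma(f^\star)-\mathcal G_\sigma(f)$, and writing $\mathbb E_{\mathbf z}$ for the empirical average over the sample $\mathbf z$, the empirical optimality $\mathcal G_{\sigma,\mathbf z}(f_{\mathbf z,\sigma})\ge\mathcal G_{\sigma,\mathbf z}(f_\mathcal H)$ yields $\mathbb E\xi_{f_{\mathbf z,\sigma}}\le(\mathbb E-\mathbb E_{\mathbf z})\xi_{f_{\mathbf z,\sigma}}+(\mathbb E_{\mathbf z}-\mathbb E)\xi_{f_\mathcal H}+\mathbb E\xi_{f_\mathcal H}$. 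The approximation term $\mathbb E\xi_{f_\mathcal H}$ is handled by the upper side of Theorem~\ref{calibration_distribution_free}, giving $\sigma^2\mathbb E\xi_{f_\mathcal H}\lesssim\|f_\mathcal H-f^\star\|_{2,\rho}^2+\sigma^{-\epsilon}$ and hence the $\|f_\mathcal H-f^\star\|_{2,\rho}^2$ in the final bound; the single-function deviation $(\mathbb E_{\mathbf z}-\mathbb E)\xi_{f_\mathcal H}$ is dispatched by Bernstein's inequality and, after multiplication by $\sigma^2$, is of lower order than the claimed sample term. The genuine work is therefore the uniform deviation $(\mathbb E-\mathbb E_{\mathbf z})\xi_{f_{\mathbf z,\sigma}}$.

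For the uniform term I would apply a one-sided, ratio-type Bernstein inequality over the class $\{\xi_f:f\in\mathcal H\}$. Two ingredients feed it. Since $p_\sigma(t)=\psi(t^2/\sigma^2)$ with $\psi(\cdot^2)$ being $L_1$-Lipschitz, the map $p_\sigma$ is $(L_1/\sigma)$-Lipschitz, whence $|\xi_f|\le(L_1/\sigma)|f-f^\star|\le 2ML_1/\sigma$ and $\mathbb E\xi_f^2\le(L_1^2/\sigma^2)\|f-f^\star\|_{2,\rho}^2$; combining the variance estimate with the calibration lower bound yields the variance--mean relation $\mathbb E\xi_f^2\lesssim\mathbb E\xi_f+\sigma^{-2-\epsilon}$, whose offset again inherits the sharp exponent $\theta_\epsilon=\epsilon$. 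The same Lipschitz dependence transfers Assumption~\ref{capacity_assumption} into $\log\mathcal N(\{\xi_f\},\eta)\lesssim\sigma^{-q}\eta^{-q}$. Feeding the uniform bound of order $\sigma^{-1}$, the variance--mean relation, and this entropy estimate into the concentration inequality and solving the resulting fixed point in the covering radius produces the sample-error term; multiplying back by $\sigma^2$ and separating the finite-second-moment regime $\epsilon\ge1$ from the heavy-tailed regime $\epsilon<1$ gives the two cases of $\Psi_2$.

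The main obstacle is this final fixed-point computation and, in particular, extracting the sharp $\sigma$-exponent $q+\frac{2\epsilon}{1+\epsilon}$. Getting it requires carefully balancing the moment-dependent variance offset (which, through a truncation of $Y$ at a level tuned to the moment exponent $\epsilon$, enters the Bernstein bound as $\sigma^{-2-\epsilon}$) against the entropy contribution $\sigma^{-q}\eta^{-q}/n$. It is exactly here that exact mean-calibration is decisive: its offset decays like $\sigma^{-2-\epsilon}$ with no cap at $\sigma^{-4}$, the cap that in the merely strongly-calibrated case forces $\Psi_1$ to switch exponents at $\epsilon=2$ and $\epsilon=3$; removing the cap lets the estimate $(\sigma^{q+2\epsilon/(1+\epsilon)}/n)^{1/(q+1)}$ persist for all $\epsilon>1$. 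I would finally verify the boundedness of $f_\mathcal H$ (clipping to the radius-$M$ ball if necessary) so that every Lipschitz and variance estimate above applies uniformly.
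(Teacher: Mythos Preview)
Your high-level plan---rerunning the machinery behind Theorem~\ref{error_bound_distribution_free} while invoking Theorem~\ref{calibration_distribution_free} with the uncapped exponent $\theta_\epsilon=\epsilon$---is exactly what the paper does, and your decomposition of the excess gain through $f_{\mathcal H}$ together with a ratio-type Bernstein/covering argument is on target.

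The gap is in the variance ingredient. You control $\mathbb E\xi_f^2$ only through the $(L_1/\sigma)$-Lipschitz property of $p_\sigma$, getting $\mathbb E\xi_f^2\le (L_1^2/\sigma^2)\|f-f^\star\|_{2,\rho}^2$ and hence the variance--mean relation $\mathbb E\xi_f^2\lesssim \mathbb E\xi_f+\sigma^{-2-\epsilon}$. If you actually push this through the Bernstein/covering fixed point and rescale by $\sigma^2$, the sample term comes out as $(\sigma^{q+2}/n)^{1/(q+1)}$, not $(\sigma^{q+2\epsilon/(1+\epsilon)}/n)^{1/(q+1)}$; since $2>2\epsilon/(1+\epsilon)$ for every finite $\epsilon$, this is strictly weaker than $\Psi_2$ and does not prove the theorem as stated. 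The offset $\sigma^{-2-\epsilon}$ that you highlight governs only the calibration contribution; it is \emph{not} what manufactures the exponent $2\epsilon/(1+\epsilon)$.

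What the paper actually feeds into the concentration step is Lemma~\ref{variance_estimate}, which combines \emph{both} Lipschitz properties of the representing function---$\psi(t^2)$ Lipschitz in $t$ and $\psi(t)$ Lipschitz in $t$---with the $(1+\epsilon)$-moment of $Y$. For $\epsilon>1$ this yields, via H\"older, the $\sigma$-free bound $\mathbb E(\sigma^2\xi_f)^2\le c_2\|f-f^\star\|_{2,\rho}^{2(\epsilon-1)/(1+\epsilon)}$, a sublinear Bernstein condition that forces the ratio exponent $\zeta_\epsilon=2/(1+\epsilon)$. The resulting factor $\gamma^{-2/(1+\epsilon)}$ in the Bernstein denominator, bounded through the constraint $\gamma\ge c_\epsilon\sigma^{-\epsilon}$, is precisely what injects $\sigma^{2\epsilon/(1+\epsilon)}$ and delivers $\Psi_2$; an analogous two-Lipschitz combination handles $0<\epsilon\le 1$. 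Your Lipschitz-only variance bound cannot reproduce this.
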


\begin{corollary}\label{corollary_II}
Under the assumptions of Theorem \ref{error_bound_distribution_free}, let $f^\star\in\mathcal{H}$ and $\sigma$ be chosen as $\sigma:= n^{\vartheta_2(\epsilon,q)}$, where 
\begin{align*}
\vartheta_2(\epsilon,q)=  
\begin{cases}
\frac{1}{(q+1)(\epsilon+1)}, & \quad\hbox{if}\quad 0<\epsilon\leq 1;\\[1ex]
\frac{1+\epsilon}{(q+\epsilon+q\epsilon)(1+\epsilon)+2\epsilon}, & \quad\hbox{if}\quad \epsilon > 1.
\end{cases}
\end{align*}
Then for any $0<\delta<1$, with probability at least $1-\delta$, it holds that
\begin{align}\label{convergence_rates_exact}
\|f_{\mathbf{z},\sigma}-f^\star\|_{2,\rho}^2\lesssim \log(2/\delta) n^{-\epsilon\vartheta_2(\epsilon, q)}.
\end{align}
\end{corollary}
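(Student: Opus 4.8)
The plan is to obtain the rate directly from Theorem~\ref{error_bound_distribution_free_II} by first eliminating the approximation term and then tuning the scale parameter $\sigma$. Since $f^\star\in\mathcal H$ is assumed, the minimizer $f_\mathcal H=\arg\min_{f\in\mathcal H}\|f-f^\star\|_{2,\rho}^2$ coincides with $f^\star$, so the approximation error $\|f_\mathcal H-f^\star\|_{2,\rho}^2$ vanishes. Invoking the exactly mean-calibrated bound \eqref{error_bound_exact}, we therefore have, with probability at least $1-\delta$, that $\|f_{\mathbf z,\sigma}-f^\star\|_{2,\rho}^2\lesssim \log(2/\delta)\,\Psi_2(n,\epsilon,\sigma)$, and it remains only to choose $\sigma$ so as to make $\Psi_2(n,\epsilon,\sigma)$ as small as possible.

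Next I would substitute the ansatz $\sigma=n^{\vartheta}$ into $\Psi_2$ and treat its two branches separately, converting each summand into a pure power of $n$ and balancing the two exponents. In the regime $0<\epsilon\le 1$, the competing terms become $\sigma^{-\epsilon}=n^{-\epsilon\vartheta}$ and $\sigma\,n^{-1/(q+1)}=n^{\vartheta-1/(q+1)}$; equating the exponents by solving $-\epsilon\vartheta=\vartheta-\frac{1}{q+1}$ gives $\vartheta=\frac{1}{(q+1)(\epsilon+1)}$, which is exactly $\vartheta_2(\epsilon,q)$ on this range and yields the common exponent $-\epsilon\vartheta$. In the regime $\epsilon>1$, writing $a=q+\frac{2\epsilon}{1+\epsilon}$, the two terms are $\sigma^{-\epsilon}=n^{-\epsilon\vartheta}$ and $(\sigma^{a}/n)^{1/(q+1)}=n^{(a\vartheta-1)/(q+1)}$; equating exponents gives $\vartheta\bigl(a+\epsilon(q+1)\bigr)=1$, and the algebraic identity $a+\epsilon(q+1)=\frac{(q+\epsilon+q\epsilon)(1+\epsilon)+2\epsilon}{1+\epsilon}$ recovers precisely the stated $\vartheta_2(\epsilon,q)$ for $\epsilon>1$. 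In both branches the balanced exponent is $-\epsilon\vartheta_2(\epsilon,q)$; since balancing equalizes the two $n$-powers and neither substitution introduces an extra prefactor, $\Psi_2$ is bounded by twice either term, delivering the claimed rate $n^{-\epsilon\vartheta_2(\epsilon,q)}$.

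Finally, I would check admissibility of the chosen $\sigma$: because $\vartheta_2(\epsilon,q)>0$ for all $\epsilon>0$ and $q>0$, the value $\sigma=n^{\vartheta_2(\epsilon,q)}$ diverges, so the requirement $\sigma>\max\{2M,1\}$ of Theorem~\ref{error_bound_distribution_free_II} holds for all sufficiently large $n$, and the finitely many small-$n$ cases are absorbed into the suppressed constant in $\lesssim$. I do not expect a genuine obstacle here; the argument is a routine bias–variance balancing, and the only point demanding care is the algebraic rearrangement $a+\epsilon(q+1)=\frac{(q+\epsilon+q\epsilon)(1+\epsilon)+2\epsilon}{1+\epsilon}$ that matches the balancing equation to the declared form of $\vartheta_2$.
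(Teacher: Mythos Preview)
Your proposal is correct and is exactly the routine balancing argument the paper has in mind; indeed the paper omits the proof entirely, stating that the corollary is ``immediate'' from Theorem~\ref{error_bound_distribution_free_II}, and your write-up simply supplies the omitted details (vanishing approximation error via $f^\star\in\mathcal H$, equating the two $n$-exponents in $\Psi_2$, and absorbing small $n$ into the constant).
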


Note that in Corollaries 8 and 10, the assumption $f^\star\in\mathcal{H}$ is introduced to vanish the approximation error term $\|f_\mathcal{H}-f^\star\|_{2,\rho}^2$, which helps  with the establishment of explicit convergence rates. Proofs of Theorems \ref{error_bound_distribution_free} and \ref{error_bound_distribution_free_II} are deferred to the appendix. Results in Corollaries \ref{corollary_I} and \ref{corollary_II} are immediate from the two theorems and so their proofs are omitted. Several remarks on the theoretical results are in order here. 
\begin{itemize}
    \item First, under the $(1+\epsilon)$ moment condition, exponential type convergence rates for $f_{\mathbf{z},\sigma}$ are established  by diverging $\sigma$ values. These results  
    demonstrate that EGM estimators can deal with regression problems in the presence of heavy-tailed noise since when $0<\epsilon<1$, the noise $Y-f^\star(X)$ does not even admit finite variance. 
    \item Second, these error bounds and convergence rates explicitly tell how the scale parameter $\sigma$ in EGM influences the learnability of $f_{\mathbf{z},\sigma}$. Such an influence is weakened when $\epsilon$ goes larger, which coincides with our intuitive understanding.
    \item Third, when $p_\sigma$ is strongly mean-calibrated, the tail of the noise distribution is sufficiently light, and when functions in $\mathcal{H}$ is smooth enough, asymptotic convergence rates of type $\mathcal{O}(n^{-2/3})$ can be obtained, suggesting the existence of a bottleneck phenomenon in learning $f^\star$. While when $p_\sigma$ is exactly mean-calibrated, such asymptotic convergence rates can be up to $\mathcal{O}(n^{-1})$. These findings and comparisons indicate the advantages of exactly mean-calibrated gain functions over strongly mean-calibrated ones. 
    \item Fourth, these theoretical results can be immediately applied to the regression schemes in the motivating scenarios in Section \ref{subsec::motivating_scenarios}. Such applications bring us novel results that deepen our understanding of these well-established but not fully-understood robust regression approaches. The instantiations and applications of the above theorems and corollaries will be  detailed in Section \ref{sec::case_studies}.  
\end{itemize}

\subsection{Learning Through EGM without Misspecification}
It has been well understood that MLEs are asymptotically optimal when the likelihood function is correctly specified. Likewise, in the context of EGM, we also have a look at the case when the noise distribution is correctly specified and the gain function $p_\sigma$ results from the kernel of such a distribution.

\begin{theorem}\label{calibration_without_misspecification}
Assume that the distribution of the noise $Y-f^\star(X)$ is symmetric and is independent of $X$. Let $p_\sigma$ be the kernel of such a distribution and be symmetric and square integrable. Then
$f^\star$ is a global maximizer of the gain functional $\mathcal G_\sigma(f)$ and 
there exists an absolute constant $C_\sigma>0$ such for any bounded measurable function $f:\mathcal{X}\to \mathbb{R}$, we have
\begin{align} 
C_\sigma\|f-f^\star\|_{2,\rho}^2\leq \mathcal{G}_\sigma(f^\star)-\mathcal{G}_\sigma(f).
\label{eq:cal-1}
\end{align}
If, in addition, $p_\sigma$ is strongly mean-calibrated, then there exists an absolute constant $C_\sigma'>0$ such that 
\begin{align} 
 \mathcal{G}_\sigma(f^\star)-\mathcal{G}_\sigma(f)
 \le C_\sigma' \|f-f^\star\|_{2,\rho}^2.
\label{eq:cal-2}
\end{align}
\end{theorem}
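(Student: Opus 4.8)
The plan is to reduce everything to a one-dimensional analysis of the autocorrelation of $p_\sigma$. Write $Z=\int_{\mathbb R}p_\sigma$, so that, since $p_\sigma$ is the kernel of the noise law, the density of $\varepsilon=Y-f^\star(X)$ equals $p_\sigma/Z$ and is independent of $X$. Setting $g=f-f^\star$ and conditioning on $X$, independence of $\varepsilon$ from $X$ gives
\[
\mathcal G_\sigma(f)=\mathbb E_X\!\int_{\mathbb R}p_\sigma(t-g(X))\,\frac{p_\sigma(t)}{Z}\,\mathrm dt=\frac1Z\,\mathbb E_X\big[\Phi(g(X))\big],\qquad \Phi(a):=\int_{\mathbb R}p_\sigma(t-a)p_\sigma(t)\,\mathrm dt .
\]
Since $\Phi(0)=\|p_\sigma\|_{L^2(\mathbb R)}^2$ and translation preserves the $L^2$-norm, expanding the square yields the key identity $\Phi(0)-\Phi(a)=\tfrac12\|p_\sigma-p_\sigma(\cdot-a)\|_{L^2(\mathbb R)}^2\ge 0$, whence
\[
\mathcal G_\sigma(f^\star)-\mathcal G_\sigma(f)=\frac1{2Z}\,\mathbb E_X\big[h(g(X))\big],\qquad h(a):=\big\|p_\sigma-p_\sigma(\cdot-a)\big\|_{L^2(\mathbb R)}^2 .
\]
Non-negativity of $h$ already shows $f^\star$ is a global maximizer of $\mathcal G_\sigma$, so both assertions reduce to sandwiching $h(a)$ between multiples of $a^2$.

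For the upper bound \eqref{eq:cal-2}, I would first record that strong mean-calibration forces $p_\sigma\in H^1(\mathbb R)$: condition (1) makes $p_\sigma$ Lipschitz with constant $L_1/\sigma$, while unimodality of a gain function gives finite total variation $\int_{\mathbb R}|p_\sigma'|=2p_\sigma(0)$, so that $\int_{\mathbb R}(p_\sigma')^2\le (L_1/\sigma)\cdot 2p_\sigma(0)<\infty$. Writing $p_\sigma(t)-p_\sigma(t-a)=\int_0^a p_\sigma'(t-u)\,\mathrm du$ and applying Minkowski's integral inequality then gives $h(a)\le a^2\|p_\sigma'\|_{L^2(\mathbb R)}^2$ for \emph{every} $a$, which yields \eqref{eq:cal-2} with $C_\sigma'=\|p_\sigma'\|_{L^2(\mathbb R)}^2/(2Z)$ and, notably, needs no bound on $f$.

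For the lower bound \eqref{eq:cal-1} I would study $r(a):=h(a)/a^2$ on a compact interval. Because $f^\star$ and $f$ are bounded by $M$, the range of $g$ lies in $[-2M,2M]$, so it suffices to bound $r$ from below on $(0,2M]$ (using that $h$ is even). Three facts combine: $h$ is continuous (continuity of translation in $L^2$); $h(a)>0$ for $a\ne0$, since equality in Cauchy--Schwarz in $\Phi(a)\le\Phi(0)$ would force $p_\sigma(\cdot-a)=p_\sigma$ a.e., i.e. $p_\sigma$ periodic, impossible for a nonzero integrable function; and $r(a)\to\|p_\sigma'\|_{L^2(\mathbb R)}^2>0$ as $a\to0$, which follows from $p_\sigma\in H^1(\mathbb R)$ via $a^{-1}(p_\sigma-p_\sigma(\cdot-a))\to p_\sigma'$ in $L^2$. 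Hence $r$ extends to a continuous, strictly positive function on $[0,2M]$ and attains a positive minimum $m$, giving $h(a)\ge m\,a^2$ on $[-2M,2M]$ and \eqref{eq:cal-1} with $C_\sigma=m/(2Z)$.

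The main obstacle is the non-degeneracy underlying the lower bound. The representation shows $h$ is bounded, so a clean estimate $h(a)\ge Ca^2$ cannot hold for all $a$; this is precisely why boundedness of $f$ (hence $|g|\le 2M$) is essential, and I would flag that the ``absolute'' constant $C_\sigma$ in fact depends on $M$, $\sigma$, and the noise law through $Z$ and $\min_{[0,2M]}r$. Establishing $r(0)>0$ (rather than merely $h(a)>0$ for $a\ne0$) is where strong mean-calibration is genuinely used, since it is what promotes $p_\sigma$ to $H^1(\mathbb R)$ and lets the difference quotient converge to $p_\sigma'$; a merely square-integrable $p_\sigma$ could conceivably let $r(a)\to 0$ and destroy the quadratic floor near the origin.
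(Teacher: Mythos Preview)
Your reduction to the one-dimensional autocorrelation $\Phi$ (the paper's $V$) and the identity $\Phi(0)-\Phi(a)=\tfrac12\|p_\sigma-p_\sigma(\cdot-a)\|_{L^2}^2$ are correct and match the paper's setup. Your upper bound for \eqref{eq:cal-2} via $p_\sigma\in H^1$ and Minkowski is fine; the paper does essentially the same thing, phrased as a mean-value estimate on $V'$ together with the Lipschitz bound $|V'(s)-V'(0)|\le \tfrac{2p_\sigma(0)L_1}{\sigma}|s|$.

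The gap is in the lower bound \eqref{eq:cal-1}. That inequality is asserted \emph{without} the strong mean-calibration hypothesis, yet your argument invokes it: your third fact, $r(a)\to\|p_\sigma'\|_{L^2}^2>0$, rests on $p_\sigma\in H^1$, which you derive from strong mean-calibration. You flag this yourself in the final paragraph, but you treat it as a limitation of the theorem rather than of your argument. As written, you have proved \eqref{eq:cal-1} only under the extra hypothesis needed for \eqref{eq:cal-2}, which is not what the statement claims.

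The repair is short, and your worry that ``a merely square-integrable $p_\sigma$ could conceivably let $r(a)\to 0$'' is unfounded. By Plancherel,
\[
h(a)=\frac{2}{\pi}\int_{\mathbb R}(\widehat{p_\sigma}(\xi))^2\sin^2\!\Big(\frac{\xi a}{2}\Big)\,\mathrm d\xi,
\]
and restricting to $|\xi|\le R$ with dominated convergence (or Fatou) gives $\liminf_{a\to0}r(a)\ge \frac{1}{2\pi}\int_{|\xi|\le R}\xi^2(\widehat{p_\sigma}(\xi))^2\,\mathrm d\xi>0$, since $\widehat{p_\sigma}(0)=Z>0$ and $\widehat{p_\sigma}$ is continuous (only $p_\sigma\in L^1\cap L^2$ is used). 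This is exactly the paper's route: it writes the same Fourier identity and, using Jordan's inequality $\sin^2(\xi a/2)\ge \xi^2 a^2/\pi^2$ on $|\xi|\le \pi/(2M)$, extracts directly the explicit constant
\[
C_\sigma=\frac{c_\sigma}{\pi^3}\int_{-\pi/(2M)}^{\pi/(2M)}\xi^2(\widehat{p_\sigma}(\xi))^2\,\mathrm d\xi>0,
\]
with no $H^1$ assumption and no compactness argument. Your compactness strategy would work too, once you replace the $H^1$-based limit by the Fourier/Fatou lower bound on $\liminf r(a)$.
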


We remark that correctly specifying the noise distribution could be a stringent and impractical requirement in real-world problems. However, the performance of EGM in this ideal situation helps better understand EGM schemes from a theoretical perspective. Theorem \ref{calibration_without_misspecification} indicates that when $\sigma$ is specified correctly so that the gain function is the kernel of the noise distribution, the resulting EGM scheme is $f^\star$-regression calibrated, that is, $\mathcal{G}_\sigma(f_{\mathbf{z},\sigma}) \to\mathcal{G}_\sigma(f^\star)$ implies $f_{\mathbf{z},\sigma} \to f^\star$ when $n\to\infty$. Note that in \eqref{eq:cal-1}, $f^\star$ denotes the underlying truth function and is not necessarily the conditional mean function, but could be broadly any location function such as the conditional median function or the conditional mode function. Moreover, the theorem also indicates that when the noise distribution is correctly specified, EGMs induced by strongly mean-calibrated gain functions are essentially equivalent to the ERM induced by the square loss while at the same time, the former ones are capable of robust regression in the absence of light-tailed noise as have been illustrated in Theorems \ref{error_bound_distribution_free} and \ref{error_bound_distribution_free_II}.

One may proceed with the establishment of error bounds and convergence rates of EGM by means of similar learning theory arguments and by recalling the regression calibration property developed in Theorem \ref{calibration_without_misspecification}. In particular, it could be also shown that faster convergence rates are obtainable due to the equivalence of strongly mean-calibrated gain functions and the square loss in this case. Details are omitted due to their great similarity to the proofs of Theorems \ref{error_bound_distribution_free} and \ref{error_bound_distribution_free_II}.

\subsection{Case Studies and Applications}\label{sec::case_studies}
The generality of the EGM framework allows us to consider specific cases by choosing specific gain functions. When the conditional mean function is of interest, we investigate above the performance of EGM estimators associated with the gain functions that are strongly mean-calibrated or exactly mean-calibrated. The usefulness of the above-established theoretical results lies in that they can be directly applied to existing well-established but yet not fully-understood robust regression schemes and provide a statistical learning assessment on them. For instance, applying these results to the four regression schemes mentioned in the motivating scenarios in Section \ref{subsec::motivating_scenarios}, we immediately obtain their error bounds and convergences rates, which are listed in Table \ref{case_studies}. While one may also apply the theoretical results to  other robust regression schemes, further exploration of the applications of the new framework and the theoretical results will be left for future research.

\begin{table}[t]
	\setlength{\tabcolsep}{10pt}
	\setlength{\abovecaptionskip}{5pt}
	\setlength{\belowcaptionskip}{5pt}
	\centering
	\captionsetup{justification=centering}
	\vspace{.5em}
	\begin{tabular}{@{\hspace{2pt}}llc@{\hspace{2pt}}}
		\toprule
	\textbf{Regression Method}	&  \textbf{Gain Function} & \textbf{Error Bounds and Rates} \\ \midrule 
			Tukey Regression &  \quad Triweight &   \eqref{error_bound_strongly} and \eqref{convergence_rates_strongly}\\ \midrule
		Truncated Least Square 
		&  \quad Epanechnikov   &  \eqref{error_bound_exact} and \eqref{convergence_rates_exact}\\ \midrule
		Geman-McClure Regression	&  \quad  Cauchy  &    \eqref{error_bound_strongly} and \eqref{convergence_rates_strongly}\\ \midrule
		 Maximum Correntropy 
		 & \quad Gaussian &  \eqref{error_bound_strongly} and \eqref{convergence_rates_strongly} \\
		\bottomrule
	\end{tabular}
	\caption{Applications of EGM Framework and Theory to Motivating Scenarios I--IV} \label{case_studies}
\end{table}

\section{Further Insights and Perspectives}\label{sec::lessons_insights}
In this section, we provide further insights and perspectives by showing that the newly developed EGM framework enables us to devise more new bounded nonconvex loss functions. As further comparisons with ERM, we also stress that, in addition to the minimum distance estimation interpretation, the adaptiveness and the boundedness of gain functions differentiate EGM from ERM.

\subsection{Devising New Bounded Nonconvex Losses from Gain Functions}
As examplified in Section \ref{translate_bounded_loss}, the EGM framework allows us to translate bounded nonconvex losses into gain functions. Such correspondence also allows us to reformulate gain functions into bounded nonconvex losses. Noticing the richness and versatility of gain functions, various new bounded nonconvex losses can be obtained. Here we example an interesting instantiation of the idea by introducing  \textit{generalized Tukey's loss} 
\begin{align*}
\ell_\mathsmaller{\sigma}(t)=
\begin{cases}
1-\left(1-\frac{|t|^m}{\sigma^m}\right)^n,& \quad\hbox{if}\quad|t|\leq \sigma;\\
1, & \quad \hbox{if}\quad|t|> \sigma.
\end{cases}
\end{align*}
The two power indices $m$, $n$ control the smoothness of the loss function and with  larger $m$ and $n$ values, the loss function becomes more smooth. In particular, the loss function turns to be more and more insensitive at the vicinity of $t=0$ when the $m$ and $n$ values become larger and larger. The generalized Tukey's loss can be derived from the gain function $p_\sigma(t)=\left(1-|t|^m/\sigma^m\right)^n$ and its introduction is inspired by the facts that when $m=2, n=3$, it reduces to the Tukey's biweight loss and when $m=2, n=1$, it gives the truncated square loss. One may explore other choices of $m$ and $n$ values, which leads to the following new bounded nonconvex losses:

\begin{description}

\item[Tricube loss from the tricube gain function.] 
The tricube loss is defined as 
\begin{align*}
\ell_\mathsmaller{\sigma}(t)=
\begin{cases}
1-\left(1-\frac{|t|^3}{\sigma^3}\right)^3,& \quad \hbox{if}\quad |t|\leq \sigma;\\
1, &\quad\hbox{if} \quad |t|>\sigma.
\end{cases}
\end{align*}
It can be reformulated from the following Tricube gain function
\begin{align*}
p_\mathsmaller{\sigma}(t)=\left(1-\frac{|t|^3}{\sigma^3}\right)^3\mathbb{I}_{\{|t|\leq \sigma\}},
\end{align*}
which results from the tricube smoothing kernel \cite{scott2015multivariate}.
\medskip  

\item[Quartic loss from the quartic gain function.] 
The quartic loss is defined as 
	\begin{align*}
	\ell_\mathsmaller{\sigma}(t)=
	\begin{cases}
	1-\left(1-\frac{t^2}{\sigma^2}\right)^2,&\quad\hbox{if}\quad |t|\leq \sigma;\\
	1, &\quad \hbox{if}\quad |t|>\sigma.
	\end{cases}
	\end{align*}
It can be derived from the quartic gain function
	\begin{align*} 
	p_\mathsmaller{\sigma}(t)=\left(1-\frac{t^2}{\sigma^2}\right)^2\mathbb{I}_{\{|t|\leq \sigma\}},
	\end{align*}
	which comes from the quartic smoothing kernel.
\medskip 

\item[Truncated absolute deviation loss from the triangle gain function.]
The truncated absolute deviation loss is defined as 
		\begin{align*}
	\ell_\mathsmaller{\sigma}(t)=
	\begin{cases}
	|t|,& \quad\hbox{if}\quad|t|\leq \sigma;\\ 
	\sigma, &\quad \hbox{if}\quad |t|>\sigma.
	\end{cases}
	\end{align*}
It can be derived from the following triangular gain function
	\begin{align*}
	p_\mathsmaller{\sigma}(t)=\left(1-\frac{|t|}{\sigma}\right)\mathbb{I}_{\{|t|\leq \sigma\}},
	\end{align*}
	 which results from the triangular smoothing kernel. 
	 
\end{description}

In addition, by hinging and translating the generalized Tukey's loss, one can also obtain a smoothened approximate of the $0-1$ loss for binary-valued regression, which may be of independent interest for robust classification.

\subsection{Boundedness and Adaptiveness of Gain Functions Make a Difference} 
Further to our comparisons of EGM with ERM, we now rethink, in addition to the minimum distance estimation interpretation of EGM, what else makes the differences between the two types of learning schemes.

Recall that a gain function can be translated into a bounded nonconvex loss and vice versa. As is commonly accepted, the boundedness of a loss function is essential in dealing with outliers in the response variable. On the other hand, EGM is associated with a gain function $p_\mathsmaller{\sigma}$ which serves as a surrogate of  $p_\mathsmaller{\varepsilon}$ and contains an integrated scale parameter $\sigma$. The introduction of this parameter provides flexibility and adaptiveness in learning.

In fact, apart from the boundedness of a gain function, it is its adaptiveness brought by the parameter $\sigma$ that distinguishes EGM from ERM. This could be further illustrated by using the following toy example on Gaussian EGM, where we consider the regression model
\begin{align}\label{toy_first}
y=f^\star(x)+\kappa(x)\varepsilon, 
\end{align}
where $x\sim U(0,1)$, $f^\star(x)=2\sin(\pi x)$, and $\kappa(x)=1+2x$. The noise variable is distributed as $\varepsilon \sim 0.5 N(-1,2.5^2)+0.5N(1,0.5^2)$. With simple computations, we know that the conditional mean function is 
$f_{\mathsmaller{\sf ME}}(x)=2\sin(\pi x)$, and the conditional mode function is approximately $f_{\mathsmaller{\sf MO}}(x)=2\sin(\pi x)+1+2x$.

\begin{figure}[t]
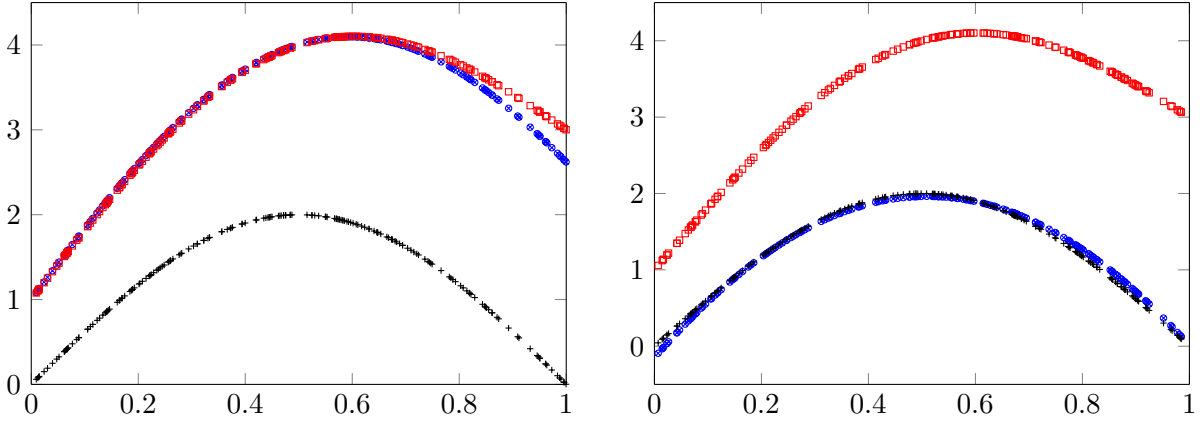
 
\tikzset{trim left=0.0 cm}
       \setlength\figureheight{6cm}
       \setlength\figurewidth{5cm}
      \begin{minipage}[b]{0.1\textwidth}
                   \input{mode_estimator.tikz}
                 \end{minipage} \quad ~~~~~~~~~~~~~~~~~~~~~~~~~~~~~~~~~~~~~~~~~~~~~~~
               \begin{minipage}[b]{0.1\textwidth}
                \input{mean_estimator.tikz}
           \end{minipage}
  \caption{In the above two panels, the dotted red curve with square marks denotes the conditional mode function $f_{\mathsmaller{\sf MO}}$ for the regression model \eqref{toy_first}. The dotted black curve with plus marks gives the conditional mean function $f_{\mathsmaller{\sf ME}}$. The dotted blue curve with $\otimes$ marks represents the learned Gaussian EGM estimator $f_{\mathbf{z},\sigma}$.}\label{toy_illustration_sigma}
\end{figure} 

In our experiment, $200$ observations are drawn from the above data-generating model for training and the size of the test set is also set to $200$. The reconstructed curve is plotted in Fig.\,\ref{toy_illustration_sigma}, in which the conditional mean function $f_{\mathsmaller{\sf ME}}$ and the conditional mode function $f_{\mathsmaller{\sf MO}}$ are also plotted. In our experiment, the hypothesis space $\mathcal{H}$ is chosen as a subset of a reproducing kernel Hilbert space induced by a Gaussian kernel, the bandwidth of which is selected through cross-validation. For the scale parameter $\sigma$ in the gain function, we set $\sigma=0.05$ in the left panel of Fig.\,\ref{toy_illustration_sigma} and $\sigma=10$ in the right panel. In the two panels, the dotted blue curves with $\otimes$ marks are the learned Gaussian EGM estimators. Clearly, from the experiments, we see that with different choices, the Gaussian EGM estimators can approach different location functions. These empirical findings together with our  theoretical results suggest that Gaussian EGM may possess more adaptiveness than ERM.

\section{Conclusion}\label{sec::conclusion}
In this paper, a framework of learning through empirical gain maximization was developed to deal with robust regression problems. The development of such a framework was inspired by several  well-established but yet not fully-understood regression schemes such as Tukey regression and Geman-McClure regression. Unlike ERM that can be traced to the framework of maximum likelihood estimation, empirical gain maximization can be interpreted from a minimum distance estimation viewpoint and thus may possess built-in robustness. To measure point-wise goodness-of-fit in regression problems, gain function was introduced. A list of gain functions was exampled and also carefully categorized. Interestingly, we showed that a variety of existing representative robust loss functions such as Tukey's biweight loss, the truncated squared loss, and the Geman-McClure loss can be reformulated as special cases of gain functions. A unified learning theory analysis was conducted to assess the performance of empirical gain maximization schemes in regression problems. The developed new framework and the conducted unified analysis not only help us better understand the existing non-convex robust regression schemes but also bring us new bounded nonconvex loss functions of the same kind.

\appendix
\section*{Appendix: Lemmas and Collected Proofs} 
\addcontentsline{toc}{section}{Appendix: Lemmas and Collected Proofs}
\addtocounter{section}{1}

In this appendix section, we provide intermediate lemmas and detailed proofs of Theorems \ref{calibration_distribution_free}, \ref{error_bound_distribution_free}, \ref{error_bound_distribution_free_II}, and \ref{calibration_without_misspecification}. Recall that for any bounded measurable function $f:\mathcal{X}\to \mathbb{R}$, $\mathcal{G}_\sigma(f)$ and  $\mathcal{G}_{\sigma,\mathbf{z}}(f)$ denote the generalization gain and the empirical gain of $f$, respectively: 
\begin{align*}
\mathcal{G}_\sigma(f)= \mathbb{E}p_\sigma(Y-f(X))\quad \hbox{and}\,\quad 
\mathcal{G}_{\sigma,\mathbf{z}}(f)=\frac{1}{n}\sum_{i=1}^n  p_\sigma(y_i-f(x_i)).   
\end{align*}
For simplification of the analysis, we introduce the scaled generalization gain $$\widetilde{\mathcal{G}}_\sigma(f) = \sigma^2 \mathcal G_\sigma (f) = \mathbb{E}\left[ \sigma^2 p_\sigma(Y-f(X)) \right] $$ and correspondingly the scaled empirical gain  
$$ \widetilde{\mathcal G}_{\sigma, \mathbf z}(f) = \sigma ^2 \mathcal{G}_{\sigma,\mathbf{z}}(f) =  \frac 1 n \sum_{i=1}^n  \sigma ^2 p_\sigma(y_i-f(x_i)).$$ We further denote $f_{\mathcal{H},\sigma}$ as the population version of $f_{\mathbf{z},\sigma}$ in $\mathcal{H}$, namely, 
\begin{align*}
f_{\mathcal{H},\sigma}=\arg\max_{f\in\mathcal{H}}\mathcal{G}_\sigma(f).   
\end{align*}
We first provide two lemmas that will be used in our proofs. 

\subsection{Lemmas}

\begin{lemma}\label{bernstein}
	Let $\xi$ be a random variable on a probability space $\mathcal{Z}$
	having variance $v$ and satisfying $|\xi-\mathbb{E}\xi|\leq
	B$ almost surely. Then for all $\varepsilon>0$, 
	\begin{align*}
	\Pr
	\left\{ \left|\frac{1}{n}\sum_{i=1}^n
	\xi(z_i)-\mathbb{E}\xi\right| > \varepsilon\right\}\leq
	2 \exp\left\{-\frac{n
		\varepsilon^2}{2(v+\frac{1}{3}B\varepsilon)}\right\}.
	\end{align*}
\end{lemma}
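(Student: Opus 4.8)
The plan is to prove this by the classical exponential-moment (Chernoff) method, which reduces a tail bound to a bound on the moment generating function of a single summand. First I would center the variable: set $\eta=\xi-\mathbb{E}\xi$, so that $\mathbb{E}\eta=0$, $\mathbb{E}\eta^2=v$, and $|\eta|\le B$ almost surely. Since the observations $z_1,\dots,z_n$ are i.i.d., the variables $\eta(z_1),\dots,\eta(z_n)$ are i.i.d. copies of $\eta$. For any $\lambda>0$, Markov's inequality applied to the exponential gives
$$\Pr\left\{\frac1n\sum_{i=1}^n\eta(z_i)>\varepsilon\right\}\le e^{-\lambda n\varepsilon}\,\mathbb{E}\exp\!\left(\lambda\sum_{i=1}^n\eta(z_i)\right)=e^{-\lambda n\varepsilon}\bigl(\mathbb{E}e^{\lambda\eta}\bigr)^n,$$
so everything hinges on controlling the single-summand quantity $\mathbb{E}e^{\lambda\eta}$.

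The key estimate is the moment generating function bound. Expanding the exponential and using $\mathbb{E}\eta=0$ to kill the linear term, I would bound the higher moments by $\mathbb{E}\eta^k\le\mathbb{E}|\eta|^k=\mathbb{E}(\eta^2|\eta|^{k-2})\le vB^{k-2}$ for every $k\ge2$. Combining this with the elementary factorial inequality $k!\ge2\cdot3^{k-2}$ (valid for all $k\ge2$) and summing the resulting geometric series yields, for $0<\lambda<3/B$,
$$\mathbb{E}e^{\lambda\eta}\le1+\frac{v\lambda^2}{2}\sum_{k\ge2}\left(\frac{\lambda B}{3}\right)^{k-2}=1+\frac{v\lambda^2}{2(1-\lambda B/3)}\le\exp\!\left(\frac{v\lambda^2}{2(1-\lambda B/3)}\right),$$
where the last step uses $1+x\le e^x$. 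Raising to the $n$th power turns the product of MGFs into $\exp\bigl(nv\lambda^2/(2(1-\lambda B/3))\bigr)$.

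Substituting this into the Chernoff bound, the upper-tail probability is at most $\exp\bigl(-\lambda n\varepsilon+nv\lambda^2/(2(1-\lambda B/3))\bigr)$, and it remains to optimize over $\lambda\in(0,3/B)$. Rather than solving the optimization exactly, I would make the explicit choice $\lambda=\varepsilon/(v+B\varepsilon/3)$, which automatically satisfies $\lambda B/3<1$; a short computation shows $1-\lambda B/3=v/(v+B\varepsilon/3)$ and collapses the exponent to exactly $-n\varepsilon^2/(2(v+B\varepsilon/3))$. This establishes the one-sided bound. Applying the identical argument to $-\eta$ bounds the lower tail by the same quantity, and combining the two tails supplies the factor $2$, completing the proof.

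The main obstacle is obtaining the precise MGF bound with the constant $\tfrac13$ appearing in the denominator $(v+\tfrac13 B\varepsilon)$: a naive moment expansion only yields a weaker sub-exponential estimate, and recovering the sharp Bernstein form requires both the specific factorial inequality $k!\ge2\cdot3^{k-2}$ and the tailored closed-form choice of $\lambda$. Each step is routine once identified, but together they are what pin down the exact constants appearing in the statement.
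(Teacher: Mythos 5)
Your proof is correct and complete: the centering $\eta=\xi-\mathbb{E}\xi$, the moment bound $\mathbb{E}|\eta|^k\le vB^{k-2}$, the factorial estimate $k!\ge 2\cdot 3^{k-2}$, and the explicit choice $\lambda=\varepsilon/(v+B\varepsilon/3)$ do collapse the Chernoff exponent to exactly $-n\varepsilon^2/\bigl(2(v+\tfrac13 B\varepsilon)\bigr)$, with the factor $2$ supplied by applying the same bound to $-\eta$. The paper states this lemma (the classical one-dimensional Bernstein inequality) without any proof, treating it as a standard tool, so there is no in-paper argument to compare against; yours is the canonical textbook derivation, and the only cosmetic point worth noting is the degenerate case $v=0$, where your $\lambda$ sits at the boundary $\lambda B/3=1$ but the inequality holds vacuously since $\xi$ is then almost surely constant.
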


\begin{lemma}
\label{lem:L3}
If a gain function $p_\sigma(t)$ is strongly mean-calibrated, then its representing function $\psi(t)$ is $L_3$-Lipschitz 
with $L_3=\max(L_2+c_0, \frac{L_1}{2}).$ 
\end{lemma}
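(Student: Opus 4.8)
The plan is to prove that $\psi$ is globally Lipschitz on $[0,\infty)$ by splitting the domain at the point $1$, establishing a Lipschitz estimate separately on $[0,1]$ (using the derivative information in condition (2) of Definition \ref{mean_calibrated_gain}) and on $[1,\infty)$ (using the $L_1$-Lipschitz property in condition (1)), and then gluing the two estimates into a single constant. As a preliminary, I would record that $\psi$ is continuous on $[0,\infty)$: since $t\mapsto\psi(t^2)$ is $L_1$-Lipschitz and hence continuous, and $t\mapsto t^2$ is a homeomorphism of $[0,\infty)$ onto itself, composing with the inverse $u\mapsto\sqrt{u}$ shows $\psi$ is continuous. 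This continuity is what will let the two regional estimates meet cleanly at $1$.

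On $[0,1)$ I would use that $\psi'$ exists and is $L_2$-Lipschitz there, together with $\psi'(0)=-c_0$ where $c_0=-\psi'(0)>0$. For $u\in[0,1)$ this gives $|\psi'(u)|\le|\psi'(0)|+L_2 u=c_0+L_2 u\le c_0+L_2$. Integrating this bound (equivalently, applying the mean value theorem) and invoking continuity of $\psi$ at $1$ shows that $\psi$ is $(c_0+L_2)$-Lipschitz on the closed interval $[0,1]$.

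The step that carries the real content is the estimate on $[1,\infty)$, and it is where the factor $L_1/2$ in $L_3$ originates. Writing $g(t):=\psi(t^2)$, which is $L_1$-Lipschitz on all of $\mathbb{R}$, I would take any $u=s^2$ and $v=t^2$ with $1\le s\le t$ and use the elementary inequality $t-s\le\frac12(t^2-s^2)$, valid precisely because $t+s\ge 2$ on $[1,\infty)$. Then $|\psi(v)-\psi(u)|=|g(t)-g(s)|\le L_1(t-s)\le\frac{L_1}{2}(v-u)$, so $\psi$ is $\frac{L_1}{2}$-Lipschitz on $[1,\infty)$. Note that no differentiability of $\psi$ is needed on this range; only the Lipschitz bound on $g$ and the algebraic inequality are used.

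Finally I would glue the two estimates. For a continuous function that is $A$-Lipschitz on $[0,1]$ and $B$-Lipschitz on $[1,\infty)$, any pair of points straddling $1$ can be split through $1$, and the triangle inequality gives $|\psi(v)-\psi(u)|\le A(1-u)+B(v-1)\le\max(A,B)(v-u)$; pairs lying entirely in one region are already controlled. Taking $A=c_0+L_2$ and $B=\frac{L_1}{2}$ yields the claimed constant $L_3=\max(L_2+c_0,\frac{L_1}{2})$. The main obstacle here is not a genuine difficulty but a bookkeeping subtlety: the derivative bound is available only on the half-open interval $[0,1)$, so one must secure continuity of $\psi$ at $1$ first in order to promote that bound to a Lipschitz estimate on the closed interval $[0,1]$ and make the gluing legitimate.
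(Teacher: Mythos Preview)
Your proposal is correct and follows essentially the same approach as the paper: split at $1$, bound $|\psi'|\le L_2+c_0$ on $[0,1)$ to get the Lipschitz constant there, use $|\sqrt{t_1}-\sqrt{t_2}|\le\frac12|t_1-t_2|$ on $[1,\infty)$ to get $L_1/2$, and glue through the point $1$. Your explicit handling of continuity at $1$ is slightly more careful than the paper's version, which tacitly uses the bound on the closed interval.
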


\begin{proof}
 For any $t_1, t_2\in\mathbb R,$ if both $t_1\ge 1$ and $t_2\ge 1,$ then by the fact that $\psi(t^2)$ is $L_1$-Lipschitz w.r.t. $t$, we have 
 \begin{align*}
    |\psi(t_1)-\psi(t_2)| & \le L_1 |\sqrt{t_1} - \sqrt{t_2}|
    = \frac{L_1 |t_1-t_2|}{\sqrt{t_1}+\sqrt{t_2}} \le \frac{L_1}{2} |t_1-t_2|.
 \end{align*}
 If both $t_1\le 1$ and $t_2\le 1$, since $\psi'$ exists and is $L_2$-Lipschitz continuous on [0, 1), 
we have for all $t\le 1,$
$$|\psi'(t)|\le  |\psi(t)-\psi'(0)|+|\psi'(0)| \le L_2 |t| + c_0
\le L_2+c_0.$$ Therefore,
$$|\psi(t_1)-\psi(t_2)| = \left|\int_{t_2}^{t_1} \psi'(t) {\rm d}t\right| \le  (L_2+c_0) |t_1-t_2|.$$
If $t_1<1$ and $t_2\ge 1$, then 
\begin{align*}
|\psi(t_1)-\psi(t_2)| & \le |\psi(t_1)-\psi(1)| + |\psi(1)-\psi(t_2)| 
\le (L_2+c_0)(1-t_1) + \frac{L_1}{2} (t_2-1) \\
& \le \max\left(L_2+c_0, \frac{L_1}2\right)(t_2-t_1). \end{align*}
Combining all the three cases, we obtain the desired conclusion.  
\end{proof}

\begin{lemma}\label{variance_estimate}
	Let Assumption \ref{moment_assumption} hold with some $\epsilon>0.$ Let $\sigma>\max\{2M,1\}$ and $p_\sigma$ be a strongly mean-calibrated gain function. For any measurable function $f: \mathcal{X}\rightarrow \mathbb{R}$ with $\|f\|_\infty\leq M$, we denote  $\xi(X,Y):=\sigma^2 p_\sigma(Y-f^\star(X)) - \sigma^2 p_\sigma(Y-f(X))$. Then 
	\begin{align*}
	\mathbb{E}\xi^2\leq 
	\begin{cases}
	c_1 \sigma^{1-\epsilon}, & \quad \hbox{if}\quad 0<\epsilon\leq 1;\\
	c_2 \|f-f^\star\|_{2,\rho}^{\frac{2(\epsilon-1)}{1+\epsilon}}, & \quad \hbox{if}\quad \epsilon>1,
	\end{cases}
	  \end{align*}
	where  $c_1$ and  $c_2$ 
	are absolute positive constants independent of $\sigma$ or $f$ and will be given explicitly in the proof. 
\end{lemma}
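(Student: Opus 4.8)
The plan is to derive two complementary pointwise bounds on $|\xi|$ and then combine them differently in the two regimes $0<\epsilon\le 1$ and $\epsilon>1$. Throughout I write $\varepsilon=Y-f^\star(X)$ for the noise and $\Delta=f^\star(X)-f(X)$, so that $Y-f(X)=\varepsilon+\Delta$ and $|\Delta|\le 2M$; moreover Assumption \ref{moment_assumption} together with $\|f^\star\|_\infty\le M$ gives $\mathbb{E}|\varepsilon|^{1+\epsilon}<\infty$. Writing $p_\sigma(t)=\psi(t^2/\sigma^2)$, the two bounds I would establish are: first, from condition (1) of Definition \ref{mean_calibrated_gain} (the map $s\mapsto\psi(s^2)$ is $L_1$-Lipschitz),
\begin{align*}
|\xi|=\sigma^2\bigl|\psi(\varepsilon^2/\sigma^2)-\psi((\varepsilon+\Delta)^2/\sigma^2)\bigr|\le L_1\sigma|\Delta|;
\end{align*}
and second, from Lemma \ref{lem:L3} (the representing function $\psi$ is $L_3$-Lipschitz),
\begin{align*}
|\xi|\le \sigma^2 L_3\,\frac{|\varepsilon^2-(\varepsilon+\Delta)^2|}{\sigma^2}=L_3|\Delta|\,|2\varepsilon+\Delta|.
\end{align*}
The first bound is uniform in the noise, while the second is sharp when the residual is small.

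For the easier case $\epsilon>1$ I would use only the second bound. Squaring gives $\xi^2\le L_3^2\Delta^2(2\varepsilon+\Delta)^2$, and the key observation is that $1+\epsilon>2$ provides just enough integrability to apply H\"older's inequality with conjugate exponents $p=\frac{1+\epsilon}{\epsilon-1}$ and $p'=\frac{1+\epsilon}{2}$. This separates the factors as $\mathbb{E}[\Delta^2(2\varepsilon+\Delta)^2]\le(\mathbb{E}\Delta^{2p})^{1/p}(\mathbb{E}|2\varepsilon+\Delta|^{1+\epsilon})^{2/(1+\epsilon)}$; the second factor is a finite constant by the moment assumption, and in the first I would use $|\Delta|\le 2M$ to write $\mathbb{E}\Delta^{2p}\le(2M)^{2p-2}\mathbb{E}\Delta^2$, so that $(\mathbb{E}\Delta^{2p})^{1/p}\lesssim(\mathbb{E}\Delta^2)^{1/p}=\|f-f^\star\|_{2,\rho}^{2(\epsilon-1)/(1+\epsilon)}$. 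The exponent $1/p=\frac{\epsilon-1}{1+\epsilon}$ is exactly what is needed, and $c_2$ collects $L_3^2$, the factor $(2M)^{2-2/p}=(2M)^{4/(1+\epsilon)}$, and the noise moment.

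The harder case is $0<\epsilon\le 1$, where $1+\epsilon\le 2$ leaves no room for the H\"older step above, so I would split the expectation over $\{|\varepsilon|\le\sigma\}$ versus $\{|\varepsilon|>\sigma\}$. On $\{|\varepsilon|\le\sigma\}$ I would use the second bound in the form $\xi^2\le L_3^2\Delta^2(8\varepsilon^2+2\Delta^2)$ and exploit the truncation via $\varepsilon^2\mathbb{I}_{\{|\varepsilon|\le\sigma\}}\le\sigma^{1-\epsilon}|\varepsilon|^{1+\epsilon}$ (valid since $\sigma\ge 1$ and $1-\epsilon\ge 0$), together with $|\Delta|\le 2M$, to produce a term of order $\sigma^{1-\epsilon}$. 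On $\{|\varepsilon|>\sigma\}$ the second bound is useless because it still carries the $\varepsilon^2$ factor whose expectation may diverge; this is the main obstacle, and the way around it is precisely the first, noise-free bound $\xi^2\le L_1^2\sigma^2\Delta^2\le 4L_1^2M^2\sigma^2$, combined with Markov's inequality $\Pr(|\varepsilon|>\sigma)\le\sigma^{-(1+\epsilon)}\mathbb{E}|\varepsilon|^{1+\epsilon}$. This yields $\mathbb{E}[\xi^2\mathbb{I}_{\{|\varepsilon|>\sigma\}}]\le 4L_1^2M^2\,\mathbb{E}|\varepsilon|^{1+\epsilon}\,\sigma^{1-\epsilon}$, again of order $\sigma^{1-\epsilon}$. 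Adding the two pieces gives the bound $c_1\sigma^{1-\epsilon}$, with $c_1$ assembled from the constants above. The essential point is that the same pair of Lipschitz estimates serves both cases: integrability carries the argument when $\epsilon>1$, whereas for $\epsilon\le 1$ one must trade the noise-dependent bound for the uniform one exactly on the heavy-tail event.
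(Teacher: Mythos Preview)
Your proposal is correct. For $\epsilon>1$ your argument is essentially the same as the paper's: both square the second (Lipschitz-in-$t$) bound and apply H\"older with exponents $\frac{1+\epsilon}{\epsilon-1}$ and $\frac{1+\epsilon}{2}$; the only cosmetic difference is that the paper first peels off $\|f-f^\star\|_\infty^{4/(1+\epsilon)}$ and then applies H\"older, whereas you apply H\"older first and then bound the high $L^p$-norm of $\Delta$ via $\|\Delta\|_\infty$.

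For $0<\epsilon\le 1$ the routes genuinely diverge. You split the expectation over $\{|\varepsilon|\le\sigma\}$ and $\{|\varepsilon|>\sigma\}$, using the second bound plus truncation $\varepsilon^2\le\sigma^{1-\epsilon}|\varepsilon|^{1+\epsilon}$ on the light-tail event and the first bound plus Markov on the heavy-tail event. The paper instead avoids any splitting by a direct interpolation: writing $\xi^2=|\xi|^{1-\epsilon}\cdot|\xi|^{1+\epsilon}$ and bounding the first factor by $(2ML_1\sigma)^{1-\epsilon}$ via the noise-free estimate and the second by $(4L_3M(|Y|+M))^{1+\epsilon}$ via the noise-dependent estimate, one immediately obtains $\mathbb{E}\xi^2\le(2ML_1)^{1-\epsilon}(4L_3M)^{1+\epsilon}\mathbb{E}(|Y|+M)^{1+\epsilon}\,\sigma^{1-\epsilon}$. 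This one-line interpolation is shorter and yields slightly cleaner constants, but your domain-splitting argument is equally valid and makes the role of the heavy-tail event more explicit.
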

\begin{proof}
Since $\psi(t^2)$ is $L_1$-Lipschtiz w.r.t. $t,$ we have 
\begin{align} 
|\sigma^2 p_\sigma(Y-f^\star(X)) - \sigma^2 p_\sigma(Y-f(X)) |
& \le \sigma^2 L_1 \left| 
\frac{Y-f^\star(X)}{\sigma} - \frac{(Y-f(X)}{\sigma}\right|  
\nonumber\\
& = L_1 \sigma |f(X)-f^\star(X)| \le 2L_1 M \sigma . 
\label{eq:bd1}
\end{align} 
Lemma \ref{lem:L3} tells that $\psi(t)$ is $L_3$-Lipschitz w.r.t. $t$ and further implies
\begin{align} 
|\sigma^2 p_\sigma(Y-f^\star(X)) - \sigma^2 p_\sigma(Y-f(X))| & \le \sigma^2 L_3 \left| 
\left(\frac{Y-f^\star(X))}{\sigma}\right)^2 - \left(\frac{Y-f(X)}{\sigma}\right)^2\right| \nonumber\\ 
&= L_3 \left|(Y-f(X)^2 -(Y-f^\star(X))^2\right| \nonumber\\
& = L_3 \left|f(X)-f^\star(X)\right| 
\left| 2 Y -f(X) - f^\star(X)\right| \nonumber\\
& \le 2L_3 |f(X)-f^\star(X)| (|Y|+M)\nonumber\\
& \le 4 L_3 M (|Y|+M) . 
\label{eq:bd2}
\end{align}
Therefore, when $0<\epsilon\leq 1$, by \eqref{eq:bd1} and \eqref{eq:bd2}, we have
\begin{align*}
\mathbb{E}\xi^2 
&\leq (2ML_1\sigma)^{1-\epsilon} (4L_3 M)^{1+\epsilon}   \mathbb{E}\left(|Y|+M\right)^{1+\epsilon} \le c_1\sigma^{1-\epsilon},
\end{align*} 
where 
$c_1=  2^{4+2\epsilon}L_1^{1-\epsilon} 
L_3 ^{1+\epsilon} M^2 (\mathbb E|Y|^{1+\epsilon} + M^{1+\epsilon})
.$ 
When $\epsilon>1$, by \eqref{eq:bd2} and H\"{o}lder inequality, we obtain
\begin{align*}
\mathbb{E}\xi^2 
&\leq (2L_3)^2 \mathbb{E}\Big[ |f(x)-f^\star(x)|^2 (|Y|+M)^2 \Big] \\
&\leq 4 L_3^2 \|f-f^\star\|_\infty^{\frac{4}{1+\epsilon}}\mathbb{E}\left(|f(X)-f^\star(X)|^{\frac{2\epsilon-2}{1+\epsilon}}(|Y|+M)^2\right)\\
&\leq c_2 \|f-f^\star\|_{2,\rho}^{\frac{2\epsilon-2}{1+\epsilon}},
\end{align*}
where $c_2 = 8L_3^2 (2M)^{\frac 4{1+\epsilon}}
\left((\mathbb E|Y|^{1+\epsilon})^{\frac{2}{1+\epsilon}}+M^2
\right).$ This completes the proof of Lemma \ref{variance_estimate}.
\end{proof}

\subsection{Proof of Theorem \ref{calibration_distribution_free}}

\begin{proof}
For any $\sigma>\max\{2M,1\}$, let $\Omega = \{|Y|> \frac \sigma 2\}$ and $\Omega^c$ be its complement. By Markov inequality, we have 
\begin{align}\label{equ_prob}
	\Pr(\Omega)\leq \cfrac{\mathbb{E}|2Y|^{1+\epsilon}}{\sigma^{1+\epsilon}}.
\end{align} 
Recalling  the identity 
\begin{align*}
	\|f-f^\star\|_{2,\rho}^2 = 
	\mathbb E\left[ (Y-f(X))^2-(Y-f^\star(X))^2\right],
\end{align*}
we can write 
\begin{align*}
	&\Big|\left[\widetilde{\mathcal{G}}_\sigma(f^\star)-
	\widetilde{\mathcal{G}}_\sigma(f)\right]-
	c_0\|f-f^\star\|_{2,\rho}^2\Big|\\[1ex]
=\, &\Big|\mathbb E\Big(\left[\sigma^2 p_\sigma(Y-f^\star (X)) -\sigma^2 p_\sigma(Y-f(X))\right]-c_0\left[(Y-f(X))^2-(Y-f^\star(X))^2\right] \Big) \Big| \\[1ex]
\leq\, &
\mathbb E\Big( \left| \left[\sigma^2 p_\sigma(Y-f^\star (X)) -\sigma^2 p_\sigma(Y-f(X))\right]-c_0\left[(Y-f(X))^2-(Y-f^\star(X))^2\right] \right| \mathbb I_{\Omega} \Big)\\[1ex] 
&\! + \mathbb E\Big( \left| \left[\sigma^2 p_\sigma(Y-f^\star (X)) -\sigma^2 p_\sigma(Y-f(X))\right]-c_0\left[(Y-f(X))^2-(Y-f^\star(X))^2\right] \right| \mathbb I_{\Omega^c} \Big)\\[1ex]
: =\, &  Q_1 + Q_2.
	\end{align*}
By \eqref{eq:bd2} and \eqref{equ_prob} we have 
\begin{align*}
    Q_1 & \le (L_3+c_0) \mathbb E \Big[\left|(Y-f(X))^2-(Y-f^\star(X))^2\right| 
    \mathbb I_{\Omega} \Big] \\
    & \le 4M(L_3+c_0) \mathbb E\Big[ (|Y|+ M) \mathbb I_{\Omega} \Big] \\
    & \le 4M(L_3+c_0) \Big(\left( \mathbb E|2Y|^{1+\epsilon}\right)^{\frac 1 {1+\epsilon}} 
    \left(\Pr(\Omega)\right)^{\frac \epsilon{1+\epsilon}} 
    + M \Pr(\Omega)\Big) \\
    & \le 4M(L_3+c_0)(\mathbb E |2Y|^{1+\epsilon} )
    \left(\sigma^{-\epsilon} + M \sigma^{-(1+\epsilon)}\right)\\
    & \le 6 M(L_3+c_0)(\mathbb E |2Y|^{1+\epsilon} ) \sigma^{-\epsilon}.
\end{align*}
In order to bound $Q_2,$ we denote
$F_\sigma(t):= -\sigma^2 p_\sigma(t)- c_0 t^2$. 
Then \begin{align*}
	Q_2 = \mathbb E \Big [ \left|F_\sigma(Y-f(X))-F_\sigma(Y-f^\star(X))\right| \mathbb I_{\Omega^c}\Big ].
	\end{align*}
By the mean value theorem, we have 
\begin{align*}
	|F_\sigma(Y-f(X))-F_\sigma(Y-f^\star(X))|=
	|F_\sigma^\prime(a)(f(X)-f^\star(X))| \le 2 M |F_\sigma'(a)|, 
\end{align*}
	where $a$ lies between $Y-f(X)$ and $Y-f^\star(X)$ 
	and hence 
	$$|a| \le \max(|Y-f(X)|, |Y-f^\star(X)|) \le |Y|+M.$$ 
By the facts $p_\sigma(t) = \psi(t^2/\sigma^2)$ and $c_0=-\psi'(0)$, we have
	\begin{align*}
	F^\prime_\sigma(a)= -\sigma^2 p_\sigma^\prime(a)- 2ac_0 = 
	2a \Big(\psi'(0)-\psi^\prime\left(a^2/\sigma^2\right)\Big).
	\end{align*}
Recalling that $\psi^\prime(t)$ is $L_2$-Lipschitz continuous on $[0,1)$, we have 
\begin{align*}
|F^\prime_\sigma(a)|\leq \frac{2L_2|a|^3}{\sigma^2}
\leq \frac{2L_2(|Y|+M)^3}{\sigma^2}\leq 
\frac{8L_2(|Y|^3+M^3)}{\sigma^2}.
\end{align*}
Therefore, if  $\epsilon\geq 2,$ we have 
\begin{align*}
Q_2 \le 16ML_2  \left( \mathbb E |Y|^3 + M^3\right) \sigma^{-2}.
\end{align*}
If $0<\epsilon<2$, by $|Y|\le \frac \sigma 2$ on $\Omega^c,$ we obtain 
\begin{align*}
Q_2 & \le 16ML_2 \left( \left(\frac{\sigma}{2}\right)^{2-\epsilon} 
\mathbb E |Y|^{1+\epsilon} + M^3\right) \sigma^{-2} 
\le 16ML_2 \left(\mathbb E |Y|^{1+\epsilon} + M^3\right) \sigma^{-\epsilon}.
\end{align*}
Combining the estimates for $Q_1$ and $Q_2$ 
we have 
\begin{align*}
\Big|\left[\widetilde{\mathcal{G}}_\sigma(f^\star) -\widetilde{\mathcal{G}}_\sigma(f)\right]-c_0\|f-f^\star\|_{2,\rho}^2\Big|   \leq\cfrac{c_\epsilon}{\sigma^{\epsilon}},
\end{align*}
with $\theta_\epsilon=\min(\epsilon,2)$ and 
\begin{align*}
c_\epsilon= 6M(L_3+c_0)\mathbb E|2Y|^{1+\epsilon} 
+ 16ML_2(\mathbb E |Y|^{\min(1+\epsilon, 3)} +M^3).
\end{align*}
This proves the desired conclusion when $p_\sigma$ is a strongly mean-calibrated gain function.

If $p_\sigma=\psi(t^2/\sigma^2)$ is exactly mean-calibrated, i.e., $\psi^\prime(t)$ is constant on $(0, 1),$ then 
 $\psi'(t)\equiv \psi'(0)= -c_0$ on $[0,1)$ and hence
 $\psi(t) = -c_0 t$, which implies $F_\sigma(t)=0.$ Therefore, 
 $Q_2=0$ and we have 
 \begin{align*}
\Big|\left[\widetilde{\mathcal{G}}_\sigma(f^\star) -\widetilde{\mathcal{G}}_\sigma(f)\right]-c_0\|f-f^\star\|_{2,\rho}^2\Big| \le Q_1   \leq\cfrac{c^\prime_\epsilon}{\sigma^{\epsilon}},
\end{align*}
where $c^\prime_\epsilon=6M(L_3+c_0)\mathbb E|2Y|^{1+\epsilon}<c_\epsilon$. 
This completes the proof of Theorem \ref{calibration_distribution_free}.
\end{proof}

\subsection{Proof of Theorem \ref{error_bound_distribution_free}}

\medskip
\noindent\textbf{Step 1}:
We first prove that, under Assumption \ref{moment_assumption}, there are two absolute constants  $c_3^\prime$ and $c_4^\prime$ (to be defined explicitly later) such that, for any $\gamma\geq c_\epsilon\sigma^{-\theta_\epsilon}$ and $f\in\mathcal H$, there holds 
\begin{align}
\Pr\left\{ 
\frac{\big|[\widetilde{\mathcal{G}}_\sigma(f^\star)-\widetilde{\mathcal{G}}_\sigma(f)]-
	[\widetilde{\mathcal{G}}_{\sigma,\mathbf{z}}(f^\star)-\widetilde{\mathcal{G}}_{\sigma,\mathbf{z}}(f)]\big|}
{\left(\widetilde{\mathcal{G}}_\sigma(f^\star)-\widetilde{\mathcal{G}}_\sigma(f)+2\gamma\right)^{1-\zeta_\epsilon}} >
\gamma^{\zeta_\epsilon}\right\} 
\le 2\, \mathrm e^{-\Theta(n,\gamma,\sigma)},
\label{ratio_inequality}
\end{align}
where 
\begin{align*}
\zeta_\epsilon= 
\begin{cases}
\frac{1}{2},& \hbox{if}\,\, 0<\epsilon\leq 1;\\[1ex]
\frac{2}{1+\epsilon},& \hbox{if}\,\, \epsilon>1,
\end{cases} 
\quad and \quad 
\Theta(n,\gamma,\sigma)=
\begin{cases}
\frac{n\gamma}
{c_3^\prime\sigma}, & \hbox{ if }\,\,0<\epsilon\leq 1;\\[1ex]
\frac{n\gamma}{c_4^\prime\left(\sigma+\sigma^{\frac{2\theta_\epsilon}{1+\epsilon}}\right)}, & \hbox{ if }\,\,\epsilon>1.
\end{cases}
\end{align*}
To this end, for any $f\in\mathcal{H}$, consider
$
\xi=\sigma^2 p_\sigma(Y-f^\star(X))- \sigma^2 p_\sigma(Y-f(X)).
$ 
By \eqref{eq:bd1}, we know $
|\xi|\leq  2M L_1 \sigma.$ Hence $|\xi-\mathbb{E}\xi| \leq 4M L_1\sigma. $
By Lemma \ref{variance_estimate},
\begin{align*}
\hbox{var}({\xi})\leq \mathbb{E}\xi^2\leq 
	\begin{cases}
	c_1 \sigma^{1-\epsilon},& \quad \hbox{if}\quad 0<\epsilon\leq 1;\\
	c_2 \|f-f^\star\|_{2,\rho}^{\frac{2(\epsilon-1)}{1+\epsilon}}, & \quad \hbox{if}\quad \epsilon>1.
	\end{cases}
\end{align*}
By Theorem \ref{calibration_distribution_free}, when $\gamma\ge c_\epsilon \sigma^{-\theta_\epsilon},$ we have
\begin{align}
\mathbb E\xi+2\gamma & = \widetilde{\mathcal{G}}_\sigma(f^\star)-\widetilde{\mathcal{G}}_\sigma(f)+2\gamma
\geq \widetilde{\mathcal{G}}_\sigma(f^\star)-\widetilde{\mathcal{G}}_\sigma(f)+c_\epsilon\sigma^{-\theta_\epsilon}+\gamma 
\nonumber \\
& \geq c_0\|f_j-f^\star\|_{2,\rho}^2+\gamma\ge \gamma.
\label{temp_3}
\end{align}
Therefore, if $0<\epsilon\le 1$, by Lemma \ref{bernstein}, we have 
\begin{align*}
& \Pr\left\{ 
\frac{\big|[\widetilde{\mathcal{G}}_\sigma(f^\star)-\widetilde{\mathcal{G}}_\sigma(f)]-
	[\widetilde{\mathcal{G}}_{\sigma,\mathbf{z}}(f^\star)-\widetilde{\mathcal{G}}_{\sigma,\mathbf{z}}(f)]\big|}
{\sqrt{\widetilde{\mathcal{G}}_\sigma(f^\star)-\widetilde{\mathcal{G}}_\sigma(f)+2\gamma } } > \sqrt \gamma  \right\} \\
\leq\, & 2\exp\left\{-\frac{n\gamma\left( \widetilde{\mathcal{G}}_\sigma(f^\star)-\widetilde{\mathcal{G}}_\sigma(f)+2\gamma \right) } {2c_1 \sigma^{1-\epsilon} + \frac  8 3 
M L_1\sigma \sqrt{\gamma \left( \widetilde{\mathcal{G}}_\sigma(f^\star)-\widetilde{\mathcal{G}}_\sigma(f)+2\gamma \right) }} \right\}  \\
\le\, & 2\exp\left\{-\frac{n\gamma\left( \widetilde{\mathcal{G}}_\sigma(f^\star)-\widetilde{\mathcal{G}}_\sigma(f)+2\gamma \right) } {2c_1 \sigma \gamma /c_\epsilon + \frac  8 3  M
L_1 \sigma  \left( \widetilde{\mathcal{G}}_\sigma(f^\star)-\widetilde{\mathcal{G}}_\sigma(f)+2\gamma \right) } \right\}  \\
\le\,  & 2\exp\left\{-\frac{n\gamma}
{c_3\sigma}\right\},
\end{align*}
where $c_3= 2c_1/c_\epsilon + \frac  8 3  M
L_1.$ If $\epsilon>1,$ we have 
\begin{align*} 
& \Pr\left\{ 
\frac{\big|[\widetilde{\mathcal{G}}_\sigma(f^\star)-\widetilde{\mathcal{G}}_\sigma(f)]-
	[\widetilde{\mathcal{G}}_{\sigma,\mathbf{z}}(f^\star)-\widetilde{\mathcal{G}}_{\sigma,\mathbf{z}}(f)]\big|}
{\left(\widetilde{\mathcal{G}}_\sigma(f^\star)-\widetilde{\mathcal{G}}_\sigma(f)+2\gamma\right)^{1-\zeta_\epsilon}} >
\gamma^{\zeta_\epsilon}\right\} \\
\leq\, &2 \exp\left\{-\frac{n\gamma^{\frac{4}{1+\epsilon}}\left(\widetilde{\mathcal{G}}_\sigma(f^\star)-\widetilde{\mathcal{G}}_\sigma(f)+2\gamma\right)^{\frac{2\epsilon-2}{1+\epsilon}}}
{2c_2\|f_j-f^\star\|_{2,\rho}^{\frac{2(\epsilon-1)}{1+\epsilon}}+ \frac 83 ML_1 \sigma
	\gamma^{\frac{2}{1+\epsilon}}\left(\widetilde{\mathcal{G}}_\sigma(f^\star)-\widetilde{\mathcal{G}}_\sigma(f)+2\gamma\right)^{\frac{\epsilon-1}{1+\epsilon}}}\right\} \\[0.5em]
\leq\, & 2 \exp\left\{-\frac{n\gamma^{\frac{4}{1+\epsilon}}\left(\widetilde{\mathcal{G}}_\sigma(f^\star)-\widetilde{\mathcal{G}}_\sigma(f)+2\gamma\right)^{\frac{2\epsilon-2}{1+\epsilon}}}
{\left(2c_2 c_0^{-\frac{\epsilon-1}{1+\epsilon}} 
+ \frac 8 3 ML_1 \sigma
	\gamma^{\frac{2}{1+\epsilon}} \right)\left(\widetilde{\mathcal{G}}_\sigma(f^\star)-\widetilde{\mathcal{G}}_\sigma(f)+2\gamma\right)^{\frac{\epsilon-1}{1+\epsilon}}}\right\} \\[0.5em]
\le\, & 2\exp\left\{-\frac{n\gamma^{\frac{2}{1+\epsilon}}\left(\widetilde{\mathcal{G}}_\sigma(f^\star)-\widetilde{\mathcal{G}}_\sigma(f_j)+2\gamma\right)^{\frac{\epsilon-1}{1+\epsilon}}}
{2c_2c_0^{-\frac{\epsilon-1}{1+\epsilon}} \gamma^{-\frac{2}{1+\epsilon}}+ \frac 8 3 ML_1 \sigma}\right\}
\\[0.5em]
\leq\, & 2 \exp\left\{-\frac{n\gamma}{c_4\left(\sigma+\sigma^{\frac{2\theta_\epsilon}{1+\epsilon}}\right)}\right\},
\end{align*}
where $c_4:= \max\left(2c_2 c_0^{-\frac{\epsilon-1}{1+\epsilon}}
c_\epsilon^{-\frac 2 {1+\epsilon}}, \frac 8 3 ML_1\right)$ and the last inequality is due to $\gamma^{-\frac{2}{1+\epsilon}}\leq {c^{-\frac{2}{1+\epsilon}}_\epsilon}\sigma^{\frac{2\theta_\epsilon}{1+\epsilon}}$ and the fact
$\widetilde{\mathcal{G}}_\sigma(f^\star)-\widetilde{\mathcal{G}}_\sigma(f)+2\gamma\geq \gamma.$ This proves \eqref{ratio_inequality}.

\bigskip 

\noindent\textbf{Step 2}: We next show that, under Assumption \ref{moment_assumption}, the uniform concentration inequality 
\begin{align}
\Pr\left\{ \sup_{f\in\mathcal{H}}
\frac{\big|[\widetilde{\mathcal{G}}_\sigma(f^\star)-\widetilde{\mathcal{G}}_\sigma(f)]-
	[\widetilde{\mathcal{G}}_{\sigma,\mathbf{z}}(f^\star)-\widetilde{\mathcal{G}}_{\sigma,\mathbf{z}}(f)]\big|}
{\left(\widetilde{\mathcal{G}}_\sigma(f^\star)-\widetilde{\mathcal{G}}_\sigma(f)+2\gamma\right)^{1-\zeta_\epsilon}} >
4\gamma^{\zeta_\epsilon}\right\} 
\le 2 \mathcal{N}\left(\mathcal{H}, \frac{\gamma}{L_1\sigma} \right) 
\mathrm e^{-\Theta(n,\gamma,\sigma)},
\label{eq:uniform_ratio_inequality}
\end{align}
holds  for $\gamma\geq c_\epsilon\sigma^{-\theta_\epsilon}.$ 

To see this, denote  $J=\mathcal{N}(\mathcal{H},\frac{\gamma}{L_1\sigma})$ and let $\{f_j\}_{j=1}^J\subset \mathcal{H}$ be a $\frac{\gamma}{L_1\sigma}$-cover of $\mathcal{H}$. For each $1\le j\le J,$ there exists some $f\in\mathcal H$ such that $\|f-f_j\|_\infty\le \frac{\gamma} {L_1\sigma}.$ Notice that the $L_1$-Lipschitz property of $\psi(t^2)$ w.r.t. $t$ implies that 
$\sigma^2 p_\sigma(t)$ is $L_1\sigma$-Lipschitz. This in combination with \eqref{temp_3} implies
\begin{align*} 
|\widetilde{\mathcal{G}}_\sigma(f)-\widetilde{\mathcal{G}}_\sigma(f_j)|\leq \sigma L_1\|f-f_j\|_\infty\leq \gamma \le 
\gamma^{\zeta_\epsilon} \left(\widetilde{\mathcal{G}}_\sigma(f^\star)-\widetilde{\mathcal{G}}_\sigma(f)+2\gamma\right)^{1-\zeta_\epsilon}
\end{align*}
and 
\begin{align*}
|\widetilde{\mathcal{G}}_{\sigma,\mathbf{z}}(f)-\widetilde{\mathcal{G}}_{\sigma,\mathbf{z}}(f_j)|\leq \sigma L_1\|f-f_j\|_\infty\leq \gamma \le 
\gamma^{\zeta_\epsilon} \left(\widetilde{\mathcal{G}}_\sigma(f^\star)-\widetilde{\mathcal{G}}_\sigma(f)+2\gamma\right)^{1-\zeta_\epsilon}. 
\end{align*}
If 
\begin{align*}
\big|[\widetilde{\mathcal{G}}_\sigma(f^\star)-\widetilde{\mathcal{G}}_\sigma(f)]-
[\widetilde{\mathcal{G}}_{\sigma,\mathbf{z}}(f^\star)-\widetilde{\mathcal{G}}_{\sigma,\mathbf{z}}(f)]\big|	
>4\gamma^{\zeta_\epsilon}\left(\widetilde{\mathcal{G}}_\sigma(f^\star)-\widetilde{\mathcal{G}}_\sigma(f)+2\gamma\right)^{1-\zeta_\epsilon}
\end{align*}
holds for all $f\in \mathcal H,$
then for every $1\le j\le J,$ there exists $f\in\mathcal H$ such that
\begin{align*}
& \big|[\widetilde{\mathcal{G}}_\sigma(f^\star)-\widetilde{\mathcal{G}}_\sigma(f_j)]-
[\widetilde{\mathcal{G}}_{\sigma,\mathbf{z}}(f^\star)-\widetilde{\mathcal{G}}_{\sigma,\mathbf{z}}(f_j)]\big| \\
  \ge\ & \big|[\widetilde{\mathcal{G}}_\sigma(f^\star)-\widetilde{\mathcal{G}}_\sigma(f)]-[\widetilde{\mathcal{G}}_{\sigma,\mathbf{z}}(f^\star)-\widetilde{\mathcal{G}}_{\sigma,\mathbf{z}}(f)]\big|  - \left| \widetilde{\mathcal G}_\sigma(f) - \widetilde{\mathcal G}_\sigma(f_j)\right| - \left| \widetilde{\mathcal G}_{\sigma,\mathbf z} (f) - \widetilde{\mathcal G}_{\sigma,\mathbf z}(f_j)\right| \\
 >\  &  2\gamma^{\zeta_\epsilon}\left(\widetilde{\mathcal{G}}_\sigma(f^\star)-\widetilde{\mathcal{G}}_\sigma(f)+2\gamma\right)^{1-\zeta_\epsilon}  \ge \gamma^{\zeta_\epsilon} \left(\widetilde{\mathcal{G}}_\sigma(f^\star)-\widetilde{\mathcal{G}}_\sigma(f_j)+2\gamma\right)^{1-\zeta_\epsilon},
\end{align*}
where the last inequality used the estimation 
\begin{align*}
\widetilde{\mathcal{G}}_\sigma(f^\star)-\widetilde{\mathcal{G}}_\sigma(f_j)+2\gamma&=
\left (\widetilde{\mathcal{G}}_\sigma(f^\star)-\widetilde{\mathcal{G}}_\sigma(f)) + 2\gamma \right) + \left( \widetilde{\mathcal G}_\sigma(f) - \widetilde{\mathcal G}_\sigma(f_j)  \right) \\
&\leq \left (\widetilde{\mathcal{G}}_\sigma(f^\star)-\widetilde{\mathcal{G}}_\sigma(f)) + 2\gamma \right) + \gamma \\  
&\leq 2(\widetilde{\mathcal{G}}_\sigma(f^\star)-\widetilde{\mathcal{G}}_\sigma(f)+2\gamma).
\end{align*}
This proves 
\begin{align*} 
&\left\{ \sup_{f\in\mathcal{H}}
\frac{\big|[\widetilde{\mathcal{G}}_\sigma(f^\star)-\widetilde{\mathcal{G}}_\sigma(f)]-
	[\widetilde{\mathcal{G}}_{\sigma,\mathbf{z}}(f^\star)-\widetilde{\mathcal{G}}_{\sigma,\mathbf{z}}(f)]\big|}
{\left(\widetilde{\mathcal{G}}_\sigma(f^\star)-\widetilde{\mathcal{G}}_\sigma(f)+2\gamma\right)^{1-\zeta_\epsilon}} >
4\gamma^{\zeta_\epsilon}\right\}  \\[1ex]
\subset & \bigcup_{j=1}^J 
\left\{
\frac{\big|[\widetilde{\mathcal{G}}_\sigma(f^\star)-\widetilde{\mathcal{G}}_\sigma(f_j)]-
	[\widetilde{\mathcal{G}}_{\sigma,\mathbf{z}}(f^\star)-\widetilde{\mathcal{G}}_{\sigma,\mathbf{z}}(f_j)]\big|}
{\left(\widetilde{\mathcal{G}}_\sigma(f^\star)-\widetilde{\mathcal{G}}_\sigma(f_j)+2\gamma\right)^{1-\zeta_\epsilon}} >
\gamma^{\zeta_\epsilon}\right\}
\end{align*} 
and the desired uniform concentration inequality 
\eqref{eq:uniform_ratio_inequality} follows immediately from \eqref{ratio_inequality}.

\bigskip 

\noindent\textbf{Step 3}: When Assumption \ref{capacity_assumption} holds, the uniform concentration inequality \eqref{eq:uniform_ratio_inequality} becomes
\begin{align*}
\Pr\left\{ \sup_{f\in\mathcal{H}}
\frac{\big|[\widetilde{\mathcal{G}}_\sigma(f^\star)-\widetilde{\mathcal{G}}_\sigma(f)]-
	[\widetilde{\mathcal{G}}_{\sigma,\mathbf{z}}(f^\star)-\widetilde{\mathcal{G}}_{\sigma,\mathbf{z}}(f)]\big|}
{\left(\widetilde{\mathcal{G}}_\sigma(f^\star)-\widetilde{\mathcal{G}}_\sigma(f)+2\gamma\right)^{1-\zeta_\epsilon}} >
4\gamma^{\zeta_\epsilon}\right\} 
\le 2 \exp\left\{ \frac {cL_1^q \sigma^q}{\gamma^q} -\Theta(n,\gamma,\sigma)\right\}.
\end{align*}
For any $0<\delta<1,$ 
let $$2 \exp\left\{ \frac {cL_1^q \sigma^2}{\gamma^q} -\Theta(n,\gamma,\sigma)\right\} = \delta,$$
or equivalently 
$$ \Theta(n,\gamma,\sigma) - \frac {cL_1^q \sigma^2}{\gamma^q} - \log(2/\delta) = 0.$$
By  Lemma 7.2 in \cite{cucker2007learning}, the equation has a unique positive solution $\gamma^\star$ that satisfies
\begin{align*}
\gamma^\star\lesssim 
\begin{cases}
\log\left(\frac{2}{\delta}\right)\frac{\sigma}{n^{1/(q+1)}}, & \hbox{if } 0<\epsilon\le 1;\\[1em] 
\log\left(\frac{2}{\delta}\right)\left(\frac{\sigma^{q+1}+\sigma^{q+\frac{2\theta_\epsilon}{1+\epsilon}}}{n}\right)^{1/(q+1)}, & \hbox{if } \epsilon>1.
\end{cases}
\end{align*}
Let $\gamma_0=\max({c_\epsilon}{\sigma^{-\theta_\epsilon}}, \gamma^\star)$. The uniform concentration inequality tells that
\begin{align*} 
\big|[\widetilde{\mathcal{G}}_\sigma(f^\star)-\widetilde{\mathcal{G}}_\sigma(f)]-
	[\widetilde{\mathcal{G}}_{\sigma,\mathbf{z}}(f^\star)-\widetilde{\mathcal{G}}_{\sigma,\mathbf{z}}(f)]
\le 4 \gamma_0^{\zeta_\epsilon}{\left(\widetilde{\mathcal{G}}_\sigma(f^\star)-\widetilde{\mathcal{G}}_\sigma(f)+2\gamma_0\right)^{1-\zeta_\epsilon}} 
\end{align*}
holds for all $f\in \mathcal H$ with probability at least $1-\delta$.
Applying Young's inequality, 
\begin{align}
\left| [\widetilde{\mathcal{G}}_\sigma(f^\star)-\widetilde{\mathcal{G}}_\sigma(f)]-
[\widetilde{\mathcal{G}}_{\sigma,\mathbf{z}}(f^\star)-\widetilde{\mathcal{G}}_{\sigma,\mathbf{z}}(f)] \right|- \frac{1}{2}[\widetilde{\mathcal{G}}_\sigma(f^\star)-\widetilde{\mathcal{G}}_\sigma(f)]\lesssim \gamma_0 
\label{eq:allf}
\end{align}
holds for all $f\in \mathcal H$ with probability at least $1-\delta$.
Applying \eqref{eq:allf} particularly to $f_{\mathbf z, \sigma} $ and $f_{\mathcal H, \sigma},$ we have 
\begin{align}
[\widetilde{\mathcal{G}}_\sigma(f^\star)-\widetilde{\mathcal{G}}_\sigma(f_{\mathbf{z},\sigma})]-
[\widetilde{\mathcal{G}}_{\sigma,\mathbf{z}}(f^\star)-\widetilde{\mathcal{G}}_{\sigma,\mathbf{z}}(f_{\mathbf{z},\sigma})]- \frac{1}{2}[\widetilde{\mathcal{G}}_\sigma(f^\star)-\widetilde{\mathcal{G}}_\sigma(f_{\mathbf{z},\sigma})]\lesssim \gamma_0
\label{eq:fz}
\end{align}
and 
\begin{align}
[\widetilde{\mathcal{G}}_{\sigma,\mathbf{z}}(f^\star)-\widetilde{\mathcal{G}}_{\sigma,\mathbf{z}}(f_{\mathcal H,\sigma})] - 
[\widetilde{\mathcal{G}}_\sigma(f^\star)-\widetilde{\mathcal{G}}_\sigma(f_{\mathcal H,\sigma})]
- \frac{1}{2}[\widetilde{\mathcal{G}}_\sigma(f^\star)-\widetilde{\mathcal{G}}_\sigma(f_{\mathcal H,\sigma})]\lesssim \gamma_0
\label{eq:fH}
\end{align}
with probability at least $1-\delta$.

\bigskip

\noindent\textbf{Step 4}: 
By the definition of $f_{\mathbf z, \sigma}$ we know
$\widetilde{\mathcal{G}}_{\sigma,\mathbf{z}}(f_{\mathbf{z},\sigma})\geq \widetilde{\mathcal{G}}_{\sigma,\mathbf{z}}(f_{\mathcal{H},\sigma}).$ 
Therefore,
\begin{align}
& \widetilde{\mathcal{G}}_\sigma(f^\star)-\widetilde{\mathcal{G}}_\sigma(f_{\mathbf{z},\sigma})=[\widetilde{\mathcal{G}}_\sigma(f^\star)-\widetilde{\mathcal{G}}_\sigma(f_{\mathcal{H},\sigma})]+[\widetilde{\mathcal{G}}_\sigma(f_{\mathcal{H},\sigma})-\widetilde{\mathcal{G}}_\sigma(f_{\mathbf{z},\sigma})] \nonumber\\
&\leq [\widetilde{\mathcal{G}}_\sigma(f^\star)-\widetilde{\mathcal{G}}_\sigma(f_{\mathcal{H},\sigma})]+[\widetilde{\mathcal{G}}_\sigma(f_{\mathcal{H},\sigma})-\widetilde{\mathcal{G}}_\sigma(f_{\mathbf{z},\sigma})]-[\widetilde{\mathcal{G}}_{\sigma,\mathbf{z}}(f_{\mathcal{H},\sigma})-\widetilde{\mathcal{G}}_{\sigma,\mathbf{z}}(f_{\mathbf{z},\sigma})].
\label{eq:decomp}
\end{align}
Combining \eqref{eq:decomp}, \eqref{eq:fz}, and \eqref{eq:fH}, we obtain that, for any $0<\delta<1$, with probability at least $1-\delta,$ 
\begin{align*}
\frac 12 [\widetilde{\mathcal{G}}_\sigma(f^\star)-\widetilde{\mathcal{G}}_\sigma(f_{\mathbf{z},\sigma})] - \frac 3 2 [\widetilde{\mathcal{G}}_\sigma(f^\star)-\widetilde{\mathcal{G}}_\sigma(f_{\mathcal{H},\sigma})] \lesssim \gamma _0,
\end{align*}
which implies 
\begin{align*}
 \widetilde{\mathcal{G}}_\sigma(f^\star)-\widetilde{\mathcal{G}}_\sigma(f_{\mathbf{z},\sigma}) \lesssim [\widetilde{\mathcal{G}}_\sigma(f^\star)-\widetilde{\mathcal{G}}_\sigma(f_{\mathcal{H},\sigma})] + \gamma _0.  
\end{align*}
By the definition of $f_{\mathcal H, \sigma},$ we have $\widetilde{\mathcal{G}}_\sigma(f_{\mathcal{H},\sigma})\geq \widetilde{\mathcal{G}}_\sigma(f_\mathcal{H}).$ Therefore,
\begin{align*}
 \widetilde{\mathcal{G}}_\sigma(f^\star)-\widetilde{\mathcal{G}}_\sigma(f_{\mathbf{z},\sigma}) \lesssim [\widetilde{\mathcal{G}}_\sigma(f^\star)-\widetilde{\mathcal{G}}_\sigma(f_{\mathcal{H}})] + \gamma _0.  
\end{align*}
By Theorem \ref{calibration_distribution_free}, we obtain for any $0<\delta<1$, with probability at least $1-\delta,$ 
\begin{align*}
\|f_{\mathbf{z},\sigma}-f^\star\|_{2,\rho}^2 \lesssim
\widetilde{\mathcal{G}}_\sigma(f^\star)-\widetilde{\mathcal{G}}_\sigma(f_{\mathbf{z},\sigma}) + c_\epsilon \sigma^{-\theta_\epsilon} 
\lesssim \widetilde{\mathcal{G}}_\sigma(f^\star)-\widetilde{\mathcal{G}}_\sigma(f_{\mathcal H}) + \gamma_0 
\lesssim \|f_\mathcal{H}-f^\star\|_{2,\rho}^2+ \gamma_0. \end{align*}
The proof is completed by noting that $\gamma_0\lesssim \log(2/\delta) \Psi_1(n, \epsilon, \sigma).$

\subsection{Proof of Theorem \ref{error_bound_distribution_free_II}}
Theorem \ref{error_bound_distribution_free_II} can be proved analogously to Theorem \ref{error_bound_distribution_free}. 
We omit the details.

\subsection{Proof of Theorem \ref{calibration_without_misspecification}}
\begin{proof}
	Since $p_\sigma(t)$ is the kernel of the distribution of $Y-f^\star(X),$ 
	there exists a constant $c_\sigma>0$ such that 
	\begin{equation} \int_{-\infty}^{+\infty} c_\sigma p_\sigma (t) {\rm d}t =1.
	\label{eq:norming}
	\end{equation} 
	Further by the assumption that $Y-f^\star(X)$ is symmetric and independent of $X$, we know $p_\sigma(t)$ is an even function and  the density of $Y|x$ is 
	$p_{\!_{Y|x}}(y) = c_\sigma p_\sigma(y-f^\star(x))$ for all $x\in\mathcal X.$
    Consider the function 
	$$ V(s) = \int_{-\infty}^{+\infty} 
	p_\sigma(t-s) p_\sigma(t) {\rm d}t.$$
   Then we see that 
   \begin{align*}
    \mathcal G_\sigma(f) & = \int_{\mathcal X}
    \int_{-\infty}^{+\infty} p_\sigma(y-f(x)) c_\sigma p_\sigma(y-f^\star(x)){\rm d}y
    {\rm d}\rho_{\!_X}(x) \\
    & = c_\sigma \int_{\mathcal X}
    \int_{-\infty}^{+\infty} p_\sigma(t-[f(x)-f^\star(x)])  p_\sigma(t) {\rm d}t {\rm d}\rho_{\!_X}(x) \\
    & = c_\sigma \int_{\mathcal X} V(f(x)-f^\star(x)) {\rm d}\rho_{\!_X}(x).
   \end{align*}

Let $\widehat{p_\sigma}$ denote the Fourier transform of $p_\sigma,$ defined by 
$$\widehat{p_\sigma}(\xi) = \int_{-\infty}^{+\infty} p_\sigma(t) \mathrm e ^{\mathrm i \xi t} \mathrm d t.$$ 
Since $p_\sigma$ is even, $\widehat{p_\sigma}$ must be real. 
The Plancherel theorem tells that 
$$V(s) = \frac{1}{2\pi} \int_{-\infty}^{+\infty} \left( \widehat{p_\sigma}(\xi) \right)^2 \mathrm e ^{\mathrm i \xi s} \mathrm d \xi = \frac 1{2\pi} 
\int_{-\infty}^{+\infty} \left(\widehat{p_\sigma}(\xi)\right)^2 
\cos (\xi s) \mathrm d \xi.$$
It is obvious that $V$ achieves its maximum when  $s=0$, which implies $f^\star$ is a global maximizer of $\mathcal G_\sigma(f).$ Next let us write
	\begin{align*}
	\mathcal{G}_\sigma(f^\star)-\mathcal{G}_\sigma(f) & = c_\sigma \int_{\mathcal X} V(0)- V(f(x)-f^\star(x)) \mathrm d \rho_{\!_X}(x) \\ & = \frac{c_\sigma}{2\pi} \int_\mathcal{X}\int_{-\infty}^{+\infty} \left(\widehat{p_\mathsmaller{{\sigma}}}(\xi)\right)^2 \Big(1-\cos( \xi (f(x)-f^\star(x))) \Big)\mathrm{d}\xi \mathrm{d}\rho_{\!_X}(x)\\
	&=\frac{c_\sigma}{\pi} \int_\mathcal{X}\int_{-\infty}^{+\infty}\left(\widehat{p_\mathsmaller{{\sigma}}}(\xi)\right)^2 \sin^2
	\left( \xi(f(x)-f^\star(x))\right)\mathrm{d}\xi \mathrm{d}\rho_X(x).
	\end{align*}
For any $x\in\mathcal{X}$, $|f(x)-f^\star(x)|\leq 2M$. When $|\xi|\leq \frac{\pi}{2M}$, from Jordan's inequality, 
	\begin{align*}
	\sin^2\left(\frac{\xi(f(x)-f^\star(x))}{2}\right)\geq \frac{\xi^2(f(x)-f^\star(x))^2}{\pi^2}.
	\end{align*} 
	As a result,
	\begin{align*} 
	\mathcal{G}_\sigma(f^\star)-\mathcal{G}_\sigma(f)& \geq \frac{\sigma}{\pi^3} \int_{\mathcal{X}}\int_{-\frac{\pi}{2M}}^{\frac{\pi}{2M}}\xi^2\left(\widehat{p_\mathsmaller{{\sigma}}}(\xi)\right)^2(f(x)-f^\star(x))^2\mathrm{d}\xi \mathrm{d}\rho_{\!_X}(x)\\
	&=C_\sigma\int_{\mathcal{X}}(f(x)-f^\star(x))^2\mathrm{d} \rho_{\!_X}(x),
	\end{align*}
	where  
	\begin{align*} 
	C_\sigma=\frac{c_\sigma}{\pi^3}\int_{-\frac{\pi}{2M}}^{\frac{\pi}{2M}}\xi^2\left(\widehat{p_\mathsmaller{{\sigma}}}(\xi)\right)^2\mathrm{d}\xi.
	\end{align*}
	Note that \eqref{eq:norming} implies 
	$\widehat{p_\mathsmaller{{\sigma}}}(0)= \frac 1 {c_\sigma}>0.$
	This in combination with the continuity of $\widehat{p_\mathsmaller{{\sigma}}}$ tells 
	$C_\sigma>0$. This proves \eqref{eq:cal-1}. 

    When $p_\sigma$ is strongly mean-calibrated, the monotonicity of $\psi$ and the $L_1$-Lischitz property of $\psi(t^2)$ w.r.t. $t$ implies that $p_\sigma'(t)$ exists almost everywhere and is odd, non-positive, and bounded. Note further 
    $$\int_{-\infty}^{+\infty} |p_\sigma'(t)|\mathrm dt 
    = -2\int_{0}^{+\infty} p_\sigma'(t)\mathrm dt 
    = 2p_\sigma(0).$$ We obtain that
	$$V'(s) = -\int_{-\infty}^{+\infty} 
	p'_\sigma(t-s) p_\sigma(t) {\rm d}t =  -\int_{-\infty}^{+\infty} 
	p'_\sigma(t) p_\sigma(t+s) {\rm d}t $$ 
	is $\frac{2p_\sigma(0) L_1}{\sigma}$-Lipschitz and  $V'(0)=0.$ 
Therefore, for each $x\in\mathcal X$, there exists a number $a_x$ lying between $0$ and $f(x)-f^\star(x)$ such that 
\begin{align*}  V(0)-V((f(x)-f^\star(x)) & 
	=  V'(a_x)(f(x)-f^\star(x)) 
 	= (V'(a_x)-V'(0))(f(x)-f^\star(x)) \\
 	& \le \frac{2p_{\sigma}(0) L_1}{\sigma} |a_x|| f(x)-f^\star(x)| \le   
 	 \frac{2p_\sigma(0) L_1}{\sigma} |f(x)-f^\star(x)|^2.
 \end{align*}  This implies the assertion in 
	\eqref{eq:cal-2} with $C_\sigma'=\frac{2p_\sigma(0) L_1}{\sigma}$.
\end{proof}

\section*{Acknowledgement}
The authors would like to thank the reviewers for insightful comments which helped improve the quality of this paper. This work was partially supported by the Simons Foundation Collaboration Grant \#572064 (YF) and \#712916 (QW). The email addresses of the authors are {\tt{ylfeng@albany.edu}} and {\tt{qwu@mtsu.edu}}, respectively. The two authors made equal contributions to this paper and are listed alphabetically. 

\bibliographystyle{plain}
\bibliography{FW2020c} 
\end{document}